\theoremstyle{plain}
\newtheorem{theorem}{Theorem}[section]
\newtheorem{proposition}[theorem]{Proposition}
\newtheorem{lemma}[theorem]{Lemma}
\newtheorem{corollary}[theorem]{Corollary}
\theoremstyle{definition}
\theoremstyle{remark}
\icmltitlerunning{Understanding Mode Connectivity via Parameter Space Symmetry}
\def\1{\bm{1}}
\def\eps{{\epsilon}}
\newcommand{\R}{\mathbb{R}}
\newcommand{\eat}[1]{}
\newcommand{\Rmnum}[1]{\expandafter\@slowromancap\romannumeral #1@}
\def\vw{{\bm{w}}}
\def\eps{{\varepsilon}}
\newcommand{\pa}[1]{\left(#1\right)}
\newcommand{\GL}{\mathrm{GL}}
\newcommand{\Par}{\text{ Param}}
\def\vw{{\bm{w}}}
\def\eps{{\varepsilon}}
\newcommand{\eqdef}{\overset{\text{def}}{=}}
\theoremstyle{plain}
\newenvironment{manualtheorem}[1]{%
  \manualtheoreminner
}{\endmanualtheoreminner}
\newenvironment{manualproposition}[1]{%
  \manualpropositioninner
}{\endmanualpropositioninner}
\newenvironment{manuallemma}[1]{%
  \manuallemmainner
}{\endmanuallemmainner}
\newenvironment{manualcorollary}[1]{%
  \manualcorollaryinner
}{\endmanualcorollaryinner}
\theoremstyle{definition}
\theoremstyle{remark}
\begin{document}

\twocolumn[
\icmltitle{Understanding Mode Connectivity via Parameter Space Symmetry}



\icmlsetsymbol{equal}{*}

\begin{icmlauthorlist}
\icmlauthor{Bo Zhao}{ucsd}
\icmlauthor{Nima Dehmamy}{ibm}
\icmlauthor{Robin Walters}{nu}
\icmlauthor{Rose Yu}{ucsd}
\end{icmlauthorlist}

\icmlaffiliation{ucsd}{University of California, San Diego}
\icmlaffiliation{ibm}{IBM Research}
\icmlaffiliation{nu}{Northeastern University}

\icmlcorrespondingauthor{Bo Zhao}{bozhao@ucsd.edu}
\icmlcorrespondingauthor{Nima Dehmamy}{nima.dehmamy@ibm.com}
\icmlcorrespondingauthor{Robin Walters}{r.walters@northeastern.edu}
\icmlcorrespondingauthor{Rose Yu}{roseyu@ucsd.edu}

\icmlkeywords{Machine Learning, ICML}

\vskip 0.3in
]



\printAffiliationsAndNotice{}  

\begin{abstract}
Neural network minima are often connected by curves along which train and test loss remain nearly constant, a phenomenon known as mode connectivity. 
While this property has enabled applications such as model merging and fine-tuning, its theoretical explanation remains unclear. 
We propose a new approach to exploring the connectedness of minima using parameter space symmetry.
By linking the topology of symmetry groups to that of the minima, we derive the number of connected components of the minima of linear networks and show that skip connections reduce this number. 
We then examine when mode connectivity and linear mode connectivity hold or fail, using parameter symmetries which account for a significant part of the minimum.
Finally, we provide explicit expressions for connecting curves in the minima induced by symmetry. 
Using the curvature of these curves, we derive conditions under which linear mode connectivity approximately holds.
Our findings highlight the role of continuous symmetries in understanding the neural network loss landscape.
\end{abstract}

\section{Introduction}



Among recent studies on the loss landscape, a particularly interesting finding is mode connectivity \citep{draxler2018essentially,garipov2018loss}---the observation that distinct minima found by stochastic gradient descent (SGD) can be connected by continuous, low-loss paths through the high-dimensional parameter space. 
Mode connectivity has important implications for other aspects of deep learning theory, including the lottery ticket hypothesis \citep{frankle2020linear} and the analysis of loss landscapes and training trajectories \citep{gotmare2018using}.
Mode connectivity has also inspired applications in diverse fields, including model ensembling \citep{garipov2018loss, benton2021loss, benzing2022random}, model averaging \citep{izmailov2018averaging, wortsman2022model}, pruning \citep{frankle2020linear}, improving adversarial robustness \citep{zhao2020bridging}, and fine-tuning for altering prediction mechanism \citep{lubana2023mechanistic}.



Despite extensive empirical validation, mode connectivity, especially linear mode connectivity, remains largely a theoretical conjecture \citep{altintas2023disentangling}. 
The limited theoretical explanation suggests a need for new proof techniques. 
In this paper, we focus on parameter symmetries, which encode information about the structure of the parameter space and the minimum.
Our work introduces a new approach towards understanding the topology of the minimum and complements existing theories on mode connectivity \citep{yunis2022convexity, freeman2017topology, nguyen2019connected, nguyen2021note, kuditipudi2019explaining, shevchenko2020landscape, nguyen2021solutions}.

Discrete symmetry is known to be related to mode connectivity. 
In particular, the neural network output, and hence the minimum, is invariant under neuron permutations \citep{hecht1990algebraic}.
Several algorithms have been developed to find optimal permutations for linear connectivity \citep{singh2020model,ainsworth2022git}, and \citet{entezari2021role} conjecture that all minima found by SGD are linearly connected up to permutation. 
Compared to discrete symmetry, the role of continuous symmetry, such as positive rescaling in ReLU, in shaping loss landscape remains less well studied. 

We explore the connectedness of minimum through continuous symmetries in the parameter space. 
Continuous symmetry groups with continuous actions define positive dimensional connected spaces in the minimum \citep{zhao2023symmetries}. 
By relating topological properties of symmetry groups to their orbits and the minimum, we show that both continuous and discrete symmetry are useful in understanding the origin and failure cases of mode connectivity. 
Additionally, continuous symmetry defines curves on the minimum \citep{zhao2024improving}. 
This enables a principled method for deriving explicit expressions for paths connecting two minima, a task that previously relied on empirical approaches.



Our main contributions are:
\begin{itemize}
    \item Providing the number of connected components of full-rank linear regression with and without skip connections, by relating topological properties of symmetry groups to those of minima.
    \item Proving mode connectivity up to permutation for linear networks with invertible weights. 
    \item Deriving examples where the error barrier on linear interpolation of minima is unbounded.
    \item Deriving explicit low-loss curves that connect minima related by symmetry, and bounding the loss barrier on linear interpolations between minima using the curvature of these curves. 
\end{itemize}

\section{Related Work}

\paragraph{Mode connectivity.} 
\citet{garipov2018loss} and 
\citet{draxler2018essentially} discover empirically that the minima of neural networks are connected by curves on which train and test loss are almost constant.
It is then observed that SGD solutions are linearly connected if they are trained from pre-trained weights \citep{neyshabur2020being} or share a short period of training at the beginning \citep{frankle2020linear}.
Additionally, neuron alignment by permutation improves mode connectivity \citep{singh2020model,tatro2020optimizing}. 
Subsequently, \citet{entezari2021role} conjecture that all minima found by SGD are linearly connected up to permutation. 
Following the conjecture, \citet{ainsworth2022git} develop algorithms that find the optimal alignment for linear mode connectivity, and \citet{jordan2023repair} further reduce the barrier by rescaling the preactivations of interpolated networks.  

It is worth noting that linear mode connectivity does not always hold outside of computer vision. Language models that are not linearly connected have different generalization strategies \citep{juneja2023linear}.
\citet{lubana2023mechanistic} further show that the lack of linear connectivity indicates that the two models rely on different attributes to make predictions. 
We derive new theoretical examples of failure cases of linear mode connectivity (Section \ref{sec:failure-case-linear-mode-connectivity}).

\paragraph{Theory on connectedness of minimum.}
Several work explores the theoretical explanation of mode connectivity by studying the connectedness of sub-level sets.
\citet{freeman2017topology} show that the minimum is connected for 2-layer linear network without regularization, and for deeper linear networks with $L2$ regularization. Futhermore, they show that the minimum of a two-layer ReLU network is asymptotically connected, that is, there exists a path connecting any two solutions with bounded error. 
\citet{nguyen2019connected} proves that the sublevel sets  are connected in pyramidal networks with piecewise linear activation functions and first hidden layer wider than $2N$, where $N$ is the number of training data). The width requirement is later improved to $N+1$ \citep{nguyen2021note}.

Others prove connectivity under dropout stability. \citet{kuditipudi2019explaining} show that a piece-wise linear path exists between two solutions of ReLU networks, if they are both dropout stable, or both noise stable and sufficiently overparametrized. 
\citet{shevchenko2020landscape} generalize this proof to show that wider neural networks are more connected, following the observation that SGD solutions for wider network are more dropout stable. 
\citet{nguyen2021solutions} give a new upper bound of the loss barrier between solutions using the loss of sparse subnetworks that are optimized, which is a milder condition than dropout stability. 

A few papers  provide theoretical insights into linear mode connectivity using different approaches.
\citet{yunis2022convexity} explain linear mode connectivity by finding a convex hull defined by SGD trajectory endpoints.
\citet{ferbach2023proving} use optimal transport theory to prove that wide two-layer neural networks trained with SGD are linearly connected with high probability.
\cite{singh2024landscaping} explain the topography of the loss landscape that enables or obstructs linear mode connectivity.
\citet{zhou2023going} show that the feature maps of each layer are also linearly connected and identify conditions that guarantee linear connectivity.
\citet{altintas2023disentangling} analyze effects of architecture, optimization algorithm, and dataset on linear mode connectivity empirically.

We approach the theoretical origin of mode connectivity via continuous symmetries in the parameter space, a connection that has not been previously established.
This connection leads to new topological results and explicit expressions of low loss curves. 
Using these results, we also contribute to the understanding for linear mode connectivity by providing conditions under which it approximately holds.

\paragraph{Symmetry in the loss landscape.} Discrete symmetries have inspired a line of work on loss landscapes. 
\citet{brea2019weight} show that permutations of a layer are connected within a loss level set.
By analyzing permutation symmetries, \citet{simsek2021geometry} characterize the geometry of the global minima manifold for networks and show that adding one neuron to each layer in a minimal network connects the permutation equivalent global minima. 
Continuous symmetries have also gained attention in optimization \citep{badrinarayanan2015symmetry,petzka2020notes,kunin2021neural,zhao2022symmetry}.
By removing permutation and rescaling symmetries, \citet{pittorino2022deep} study the geometry of minima in the functional space. 
\citet{zhao2023symmetries} find a set of nonlinear continuous symmetries that partially parametrizes the minimum. 
\citet{zhao2024improving} use symmetry induced curves to approximate the curvature of the minimum. 
Our paper explores a new application of parameter symmetry---explaining the connectedness of the minimum.

\section{Preliminaries}
\label{sec:background}

In this section, we review mathematical concepts used in the paper and list some useful results on the number of connected components of topological spaces. A more detailed version with proofs can be found in Appendix \ref{appendix:background}. 

\subsection{Connected components}
Consider two topological spaces $X$ and $Y$. A map $f: X \to Y$ is \textit{continuous} if for every open subset $U \subseteq Y$, its preimage $f^{-1}(U)$ is open in $X$. If $X$ and $Y$ are metric spaces with metrics $d_X$ and $d_Y$ respectively, this is equivalent to the delta-epsilon definition. That is, $f$ is continuous if at every $x \in X$, for any $\epsilon > 0$ there exists $\delta > 0$ such that $d_X(x, y) < \delta$ implies $d_Y(f(x), f(y)) < \epsilon$ for all $y \in X$.

A topological space is \textit{connected} if it cannot be expressed as the union of two disjoint, nonempty, open subsets. A topological space $X$ is \textit{path connected} if for every $p, q \in X$, there is a continuous map $f: [0, 1] \to X$ such that $f(0) = p$ and $f(1) = q$. Path connectedness implies connectedness. The converse is not always true \citep{lee2010introduction}, but counterexamples are often specifically constructed and unlikely to be encountered in the context of deep learning. Path connectedness can therefore help develop intuition for connectedness, for practical purposes. 

The following theorem is the main intuition of this paper and will appear frequently in proofs.
\begin{theorem}[Theorem 4.7 in \cite{lee2010introduction}]
\label{theorem:main-theorem-on-connectedness-main}
    Let $X, Y$ be topological spaces and let $f: X \to Y$ be a continuous map. If $X$ is connected, then $f(X)$ is connected. 
\end{theorem}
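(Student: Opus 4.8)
The plan is to argue by contradiction directly from the definition of connectedness. Suppose, for the sake of contradiction, that $f(X)$ is not connected. Then, viewing $f(X)$ with the subspace topology inherited from $Y$, there exist disjoint nonempty sets $A, B$, open in $f(X)$, with $f(X) = A \cup B$. By the definition of the subspace topology, I may write $A = f(X) \cap U$ and $B = f(X) \cap V$ for some open subsets $U, V \subseteq Y$.

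The next step is to pull these back along $f$ and check that $f^{-1}(U)$ and $f^{-1}(V)$ constitute a separation of $X$. Since $f$ is continuous, both preimages are open in $X$. They cover $X$, because every $x \in X$ satisfies $f(x) \in f(X) = A \cup B \subseteq U \cup V$. They are disjoint, because any point in $f^{-1}(U) \cap f^{-1}(V)$ would map into $U \cap V \cap f(X) = A \cap B = \emptyset$. And each is nonempty: since $A \neq \emptyset$ and $B \neq \emptyset$, and every element of $f(X)$ equals $f(x)$ for some $x \in X$, we can pick preimages of points of $A$ and of $B$. Thus $X = f^{-1}(U) \cup f^{-1}(V)$ exhibits $X$ as a union of two disjoint nonempty open sets, contradicting the hypothesis that $X$ is connected. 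Hence $f(X)$ is connected.

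There is no substantial obstacle here; the one point that needs care — more a matter of precise bookkeeping than difficulty — is the distinction between being open in $Y$ and being open in the subspace $f(X)$. The argument relies on realizing the separating sets of $f(X)$ as traces $f(X) \cap U$, $f(X) \cap V$ of genuine open subsets of $Y$, so that their $f$-preimages are open in $X$; this is exactly what the definition of the subspace topology provides, and glossing over it is the only place the proof could go wrong. An equivalent route, which I would mention as a remark, avoids $f(X)$ altogether: a space $Z$ is connected iff every continuous map $Z \to \{0,1\}$ into the two-point discrete space is constant; then for any continuous $g \colon f(X) \to \{0,1\}$ the composite $g \circ f \colon X \to \{0,1\}$ is continuous, hence constant because $X$ is connected, which forces $g$ to be constant on $f(X)$.
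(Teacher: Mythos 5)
Your proof is correct, and it is the standard textbook argument. Note, though, that the paper does not prove this statement at all---it is cited as Theorem 4.7 of Lee's \emph{Introduction to Topological Manifolds} and used as a black box, so there is no in-paper proof to compare against. Your contradiction argument (realize a separation of $f(X)$ as the traces $f(X) \cap U$, $f(X) \cap V$ of open sets of $Y$, pull back to a separation of $X$) is exactly what appears in Lee, Munkres, and similar sources, and you correctly identify the subspace-topology bookkeeping as the one place where care is needed. A small cosmetic simplification: rather than verifying disjointness of $f^{-1}(U)$ and $f^{-1}(V)$ by intersecting with $f(X)$, observe that $f^{-1}(U) = f^{-1}(f(X)\cap U) = f^{-1}(A)$ because $f$ maps into $f(X)$, so the preimages $f^{-1}(A)$ and $f^{-1}(B)$ are disjoint automatically from $A \cap B = \emptyset$; this also makes openness and nonemptiness immediate. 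Your closing remark via maps to the two-point discrete space is the usual slick alternative and is likewise fine.
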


A map $f$ is a \textit{homeomorphism} from $X$ to $Y$ if $f$ is bijective and both $f$ and $f^{-1}$ are continuous. $X$ and $Y$ are \textit{homeomorphic} if such a map exists. A \textit{(connected) component} of a topological space $X$ is a maximal nonempty connected subset of $X$. The components of $X$ form a partition of $X$.
The next two corollaries of Theorem \ref{theorem:main-theorem-on-connectedness-main} show that connectedness and the number of connected components are topological properties. That is, they are preserved under homeomorphisms.
\begin{corollary}
\label{corollary:homeomorphism-invariance-of-connectedness}
    Let $f: X \to Y$ be a homeomorphism from $X$ to $Y$, and let $U \subseteq X$ be a subset of $X$ with the subspace topology. Then $U$ is connected if and only if $f(U) \subseteq Y$ is connected.
\end{corollary}

\begin{corollary}
\label{corollary:topological-invariance-of-connectedness}
    Let $X$ be a topological space that has $N$ components. Let $Y$ be a topological space homeomorphic to $X$. Then $Y$ has $N$ components.
\end{corollary}

Another consequence of Theorem \ref{theorem:main-theorem-on-connectedness-main} is the following upper bound on the number of components of the image of a continuous map.
\begin{proposition}
\label{prop:connected-components-continuous-map-main}
    Let $f: X \to Y$ be a continuous map. The number of components of the image $f(X) \subseteq Y$ is at most the number of components of $X$.
\end{proposition}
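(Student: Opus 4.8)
The plan is to push the component decomposition of $X$ forward through $f$ and count. First I would write $X = \bigsqcup_{i \in I} X_i$ as the partition of $X$ into its connected components, so that $|I|$ is the number of components of $X$ (possibly infinite). Since each $X_i$ is connected and $f$ is continuous, Theorem \ref{theorem:main-theorem-on-connectedness-main} gives that each image $f(X_i)$ is a connected subset of $Y$, and hence a connected subset of $f(X)$ equipped with its subspace topology. Because $X = \bigcup_{i} X_i$, we also have $f(X) = \bigcup_{i} f(X_i)$, so these connected pieces cover the image.

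Next I would invoke the standard fact that every connected subset of a space lies in exactly one of its connected components. Writing $\{C_j\}_{j \in J}$ for the components of $f(X)$, each $f(X_i)$ is contained in a unique $C_{j(i)}$, which defines a map $j \colon I \to J$. The key step is to show $j$ is surjective: fix any component $C_j$ and pick a point $y \in C_j$ (components are nonempty); since the sets $f(X_i)$ cover $f(X)$, we have $y \in f(X_i)$ for some $i$, and then $f(X_i)$ is a connected set meeting $C_j$, so $f(X_i) \subseteq C_j$, i.e.\ $j(i) = j$. Surjectivity of $j \colon I \to J$ yields $|J| \le |I|$, which is exactly the claim.

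I do not expect a genuine obstacle here; the only points requiring care are bookkeeping ones — interpreting ``number of components'' as a cardinality so the argument is valid when $X$ has infinitely many components, and being explicit that the components of $f(X)$ are taken in the subspace topology (the topology for which Theorem \ref{theorem:main-theorem-on-connectedness-main} applies to the corestriction $f \colon X \to f(X)$). The supporting facts — that components partition a space, and that a connected subset is contained in a single component — are already recorded in the preliminaries or follow immediately from Theorem \ref{theorem:main-theorem-on-connectedness-main}.
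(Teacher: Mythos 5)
Your proof is correct and follows essentially the same route as the paper's: decompose $X$ into components, push each forward by $f$ to get connected subsets covering $f(X)$, and observe there is a surjection onto the set of components of $f(X)$. Your write-up is a bit more careful than the paper's — you explicitly verify surjectivity of the induced map and note that the argument extends to infinite cardinalities — but the underlying idea is identical.
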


Let $X_1, ..., X_n$ be topological spaces. The \textit{product space} is their Cartesian product $X_1 \times ... \times X_n$ endowed with the product topology. Denote $\pi_0(X)$ as the set of connected components of a space $X$. The following proposition provides a way to count the components of a product space.
\begin{proposition}
\label{prop:connected-component-direct-product}
    Consider $n$ topological spaces $X_1, ..., X_n$. Then $|\pi_0(X_1 \times ... \times X_n)| = \prod_{i=0}^n |\pi_0(X_i)|$.
\end{proposition}

\subsection{Groups}
A \textit{group} is a set $G$ together with a composition law, written as juxtaposition, that satisfies associativity, $(ab)c=a(bc)$ $\forall\ a,b,c \in G$, has an identity $1$ such that $1a=a1=a$ $\forall\ a \in G$, and for all $a \in G$, there exists an inverse $b$ such that $ab=ba=1$. 
An \textit{action} of a group $G$ on a set $S$ is a map $\cdot: G \times S \to S $ that satisfies $1 \cdot s = s$ for all $s \in S$ and $(g g') \cdot s = g \cdot (g' \cdot s)$ for all $g, g'$ in $G$ and all $s$ in $S$. The \textit{orbit} of $s \in S$ is the set $O(s) = \{s' \in S ~|~ s' = gs \text{ for some } g \in G\}$.

A \textit{topological group} is a group $G$ endowed with a topology such that multiplication and inverse are both continuous. A recurring example is the general linear group $GL_n(\R)$, with the subspace topology obtained from $\R^{n^2}$. The group $GL_n(\R)$ has two connected components, which correspond to matrices with positive and negative determinant.

The \textit{product} of groups $G_1, ..., G_n$ is a group denoted by $G_1 \times ... \times G_n$. The set underlying $G_1 \times ... \times G_n$ is the Cartesian product of $G_1, ..., G_n$. The group structure is defined by identity $(1, ..., 1)$, inverse $(g_1, ..., g_n)^{-1} = (g_1^{-1}, ..., g_n^{-1})$, and multiplication rule $(g_1, ..., g_n)(g_1', ..., g_n') = (g_1g_1', ..., g_ng_n')$.

\subsection{Connectedness of groups, orbits, and level sets}
From Theorem \ref{theorem:main-theorem-on-connectedness-main}, continuous maps preserve connectedness. Through continuous actions, we study the connectedness of orbits and level sets by relating them to the connectedness of more familiar objects such as the general linear group. 
Establishing a homeomorphism from the group to the set of minima requires the symmetry group's action to be continuous, transitive, and free. 
Here we only assume the action to be continuous and try to bound the number of components of the orbits. 

As an immediate consequence of Proposition \ref{prop:connected-components-continuous-map-main}, an orbit cannot have more components than the group.
\begin{corollary}
\label{corollary:connected-components-orbit-group-main}
    Assume that the action of a group $G$ on $S$ is continuous. Then the number of connected components of orbit $O(s)$ is smaller than or equal to the number of connected components of $G$, for all $s$ in $S$.
\end{corollary}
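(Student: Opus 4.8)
The plan is to realize the orbit $O(s)$ as a continuous image of $G$ and then quote Proposition \ref{prop:connected-components-continuous-map-main}. Note that we are given only continuity of the action; transitivity and freeness, which would be needed to build a homeomorphism from $G$ onto the orbit, play no role here, so the argument is short.

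First I would fix $s \in S$ and introduce the orbit map $\phi_s : G \to S$, $\phi_s(g) = g \cdot s$, whose image is exactly $O(s)$ by the definition of the orbit. The second step is to check that $\phi_s$ is continuous. I would write $\phi_s$ as the composition of the slice inclusion $\iota_s : G \to G \times S$, $g \mapsto (g, s)$, with the action map $\cdot : G \times S \to S$. The map $\iota_s$ is continuous because each of its coordinate functions --- the identity on $G$ and the constant map at $s$ --- is continuous, so $\iota_s$ is continuous for the product topology; the action is continuous by hypothesis; hence $\phi_s = (\cdot) \circ \iota_s$ is continuous, and it remains continuous when viewed as a map onto its image $O(s)$ equipped with the subspace topology.

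Finally I would apply Proposition \ref{prop:connected-components-continuous-map-main} to the continuous map $\phi_s : G \to S$: the number of connected components of $\phi_s(G) = O(s)$ is at most the number of connected components of $G$. Since $s \in S$ was arbitrary, the bound holds for every $s$, which is the claim.

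I do not expect a genuine obstacle here: the only step that is more than a formality is verifying continuity of the orbit map, and even there the single non-tautological ingredient is that the slice inclusion $g \mapsto (g, s)$ is continuous into the product topology. Everything else is a direct invocation of the background results collected in Section \ref{sec:background}.
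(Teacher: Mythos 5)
Your proposal is correct and follows essentially the same route as the paper: realize $O(s)$ as the image of the orbit map $\phi_s: G \to S$, $g \mapsto g\cdot s$, note that $\phi_s$ is continuous, and apply Proposition~\ref{prop:connected-components-continuous-map-main}. You are in fact slightly more careful than the paper's one-line appendix proof, which simply calls $O(s)$ ``the image of the group action''; your explicit factorization $\phi_s = (\cdot)\circ\iota_s$ makes the continuity claim precise without changing the argument.
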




Let $X$ be a topological space and $L : X \to \R$ a continuous function on $X$. A topological group $G$ is said to be a \textit{symmetry group} of $L$ if $L(g \cdot x) = L(x)$ for all $g \in G$ and $x \in X$.
In this case, the action can be defined on a level set of $L$, $L^{-1}(c)$ with a $c \in \R$, as $G \times L^{-1}(c) \to L^{-1}(c)$.
If the minimum of $L$ consists of a single orbit, Corollary \ref{corollary:connected-components-orbit-group-main} extends to the number of components of the minimum.
\begin{corollary}
    Let $L$ be a function with a symmetry group $G$. If the minimum of $L$ consists of a single $G$-orbit, then the number of connected components of the minimum is smaller or equal to the number of connected components of $G$.
\end{corollary}

Generally, symmetry groups do not act transitively on a level set $L^{-1}(c) \in X$. 
In this case, the connectedness of the orbits does not directly inform the connectedness of the level set. 
Nevertheless, since the set of orbits partitions the space, we can use the following bound on the number of components of the space. 

\begin{proposition}
\label{prop:components-from-partition-main}
    Let $X$ be a topological space and let $X = \coprod_i X_i$ be a partition of $X$ into disjoint subspaces. Then $| \pi_0(X)| \leq \sum_i |\pi_0(X_i)|$.
\end{proposition}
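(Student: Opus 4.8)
The plan is to exhibit a surjection $\Phi \colon \coprod_i \pi_0(X_i) \to \pi_0(X)$ from the disjoint union of the component sets of the pieces onto the component set of $X$; once this is in hand, the inequality is immediate, since $\left|\coprod_i \pi_0(X_i)\right| = \sum_i |\pi_0(X_i)|$ by definition of the cardinality of a disjoint union.

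First I would set up the map. For each index $i$, the inclusion $\iota_i \colon X_i \hookrightarrow X$ is continuous, so by Theorem \ref{theorem:main-theorem-on-connectedness-main} the image of any connected subset of $X_i$ is connected in $X$; in particular every connected component $D$ of $X_i$ is a nonempty connected subset of $X$. Since the components of $X$ partition $X$, the connected set $D$ must lie inside exactly one of them: it meets at least one component because it is nonempty, and it cannot meet two, because two distinct components of $X$ are disjoint and a connected set cannot be split across a disjoint union of (relatively open-and-closed) pieces. This assigns to each $D \in \pi_0(X_i)$ a well-defined element $\Phi(D) \in \pi_0(X)$, namely the unique component of $X$ containing $D$.

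It remains to check surjectivity and conclude. Given a component $C \in \pi_0(X)$, pick any $x \in C$ (possible since $C \neq \emptyset$); because $\{X_i\}$ is a partition there is a unique $i$ with $x \in X_i$, and letting $D$ be the component of $X_i$ containing $x$, we have $x \in D \cap C$, so the component $\Phi(D)$ of $X$ containing $D$ also contains $x$, forcing $\Phi(D) = C$ by disjointness of the components of $X$. Hence $\Phi$ is onto, and $|\pi_0(X)| \le \left|\coprod_i \pi_0(X_i)\right| = \sum_i |\pi_0(X_i)|$. I expect the only point requiring care — rather than a genuine obstacle — to be the well-definedness of $\Phi$, i.e.\ that each component of a piece $X_i$ lands in a single component of $X$; this is exactly where transitivity of the subspace topology (equivalently, continuity of the inclusion plus Theorem \ref{theorem:main-theorem-on-connectedness-main}) and the pairwise disjointness of the components of $X$ are used, and the argument goes through verbatim for an arbitrary, possibly infinite, index set.
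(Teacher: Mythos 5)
Your proof is correct and takes essentially the same approach as the paper's: both construct a surjection from the set of components of the pieces $X_i$ onto $\pi_0(X)$ by sending each component of a piece to the unique component of $X$ containing it, then read off the cardinality bound. The paper packages the domain as the collection $S$ of all components of all the $X_i$ (which partitions $X$ into connected sets) and is terser about well-definedness and surjectivity, whereas you spell out both checks explicitly via the continuity of the inclusions and the disjointness of the components of $X$; the content is the same.
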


Consider a topological space $X$ and a group $G$ that acts on $X$. Let $O = \{O_1, ..., O_n\}$ be the set of orbits. By Proposition \ref{prop:components-from-partition-main}, the number of components of the orbits give the following upper bound on the number of components of the space: $|\pi_0(X)| \leq \sum_{i=1}^n |\pi_0(O_i)|$.

\section{Connected Components of the Minimum}
\label{sec:connectedness}
In this section, we relate topological properties of symmetry groups to topological properties of the minimum. In particular, we provide the number of connected components of the minimum when all symmetries are known. Omitted proofs can be found in Appendix \ref{appendix:connectedness}.

\subsection{Linear network with invertible weights}
\label{sec:connectedness-full-rank-linear}

Let $\Par$ be the space of parameters. Consider the multi-layer loss function $L: \Par \to \R$,
\begin{align}
\label{eq:loss-multi-layer-linear}
    L: \Par \to \R, & & (W_1, ..., W_l) \mapsto || Y - W_l...W_1 X||^2_2.
\end{align}
where $X, Y \in \R^{h \times h}$ are the input and output of the network. In this subsection, we assume that both $X, Y$ have rank $h$, and $\Par = (\R^{h\times h})^l$. 
Then $L$ is invariant to $GL_h(\R)^{l-1}$, which acts on $\Par$ by $g \cdot (W_1, ..., W_l) = (g_1 W_1, g_2 W_2 g_1^{-1}, ..., g_{l-1} W_{l-1} g_{l-2}^{-1}, W_l g_{l-1}^{-1})$, for $(g_1, ..., g_{l-1}) \in GL_h(\R)^{l-1}$.

Let $L^{-1}(c) = \{\theta \in \Par: L(\theta) = c\}$ be a level set of $L$. Since $\|\cdot\|_2 \geq 0$ and $L^{-1}(0) \neq \emptyset$, the minimum value of $L$ is 0. 
By relating the topology of $GL_h(\R)$ and $L^{-1}(0)$, we have the following observations on the structure of the minimum of $L$.

\begin{proposition}
\label{prop:homeo-minimum-GL}
    There is a homeomorphism between $L^{-1}(0)$ and $(\GL_h)^{l-1}$.
\end{proposition}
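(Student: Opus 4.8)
The plan is to exhibit an explicit pair of mutually inverse continuous maps between $L^{-1}(0)$ and $(\GL_h)^{l-1}$, rather than invoking an abstract "transitive + free $\Rightarrow$ homeomorphism" principle (which in a general topological setting would also require checking that the orbit map is open). First I would use invertibility of $X$ to simplify the level set: since $X$ has rank $h$, the equation $W_l\cdots W_1 X = Y$ is equivalent to $W_l\cdots W_1 = YX^{-1}$, so setting $A := YX^{-1} \in \GL_h(\R)$ we get $L^{-1}(0) = \{(W_1,\dots,W_l) \in \Par : W_l\cdots W_1 = A\}$.

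The key structural observation — the step I would be most careful about — is that every tuple in $L^{-1}(0)$ automatically has all $W_i$ invertible. Indeed $\operatorname{rank}(W_l\cdots W_1) \le \min_i \operatorname{rank}(W_i)$, and since $\operatorname{rank}(A) = h$ this forces each $W_i$ to have full rank. Hence $L^{-1}(0) \subseteq (\GL_h(\R))^l$, which is precisely what makes the $\GL_h(\R)^{l-1}$-action on the minimum transitive and the construction below well defined.

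Next I would define $\psi : L^{-1}(0) \to (\GL_h)^{l-1}$ by $\psi(W_1,\dots,W_l) = (g_1,\dots,g_{l-1})$ with $g_k = W_k W_{k-1}\cdots W_1 A^{-1}$ for $k = 1,\dots,l-1$; each $g_k$ is a product of invertible matrices, so it lies in $\GL_h(\R)$, and $\psi$ is continuous since matrix multiplication is. In the other direction, define $\phi : (\GL_h)^{l-1} \to \Par$ as the orbit map of the reference point $\theta_0 = (A, I, \dots, I)$ under the stated action, namely
\[
\phi(g_1,\dots,g_{l-1}) = \bigl(g_1 A,\ g_2 g_1^{-1},\ g_3 g_2^{-1},\ \dots,\ g_{l-1}g_{l-2}^{-1},\ g_{l-1}^{-1}\bigr).
\]
A telescoping computation shows the product of these factors equals $A$, so $\phi$ indeed lands in $L^{-1}(0)$, and $\phi$ is continuous because inversion and multiplication are continuous on $\GL_h(\R)$.

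Finally I would verify $\psi\circ\phi = \id$ and $\phi\circ\psi = \id$ by direct substitution. For $\psi\circ\phi$, the partial products of $\phi(g)$ telescope to $W_k'\cdots W_1' = g_k A$, so the $k$-th component of $\psi(\phi(g))$ is $g_k A A^{-1} = g_k$. For $\phi\circ\psi$, substituting $g_k = W_k\cdots W_1 A^{-1}$ into the formula for $\phi$ and cancelling recovers $W_k$ in every slot, the last being $g_{l-1}^{-1} = A(W_{l-1}\cdots W_1)^{-1} = (W_l\cdots W_1)(W_{l-1}\cdots W_1)^{-1} = W_l$. Since $\psi$ and $\phi$ are mutually inverse and both continuous with respect to the subspace topologies inherited from the ambient Euclidean spaces, they are homeomorphisms, proving the claim. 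The only real content is the rank argument in the second paragraph; the rest is bookkeeping with telescoping products.
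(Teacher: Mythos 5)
Your proof is correct and takes essentially the same approach as the paper: exhibit an explicit pair of mutually inverse continuous maps between $L^{-1}(0)$ and $(\GL_h)^{l-1}$. The paper parametrizes coordinate-by-coordinate ($g_1 = W_1 X$, $g_k = W_k$ for $2 \le k \le l-1$, with $W_l$ recovered from the constraint) whereas you use partial products $g_k = W_k\cdots W_1 A^{-1}$ (i.e.\ the orbit map of $(A, I, \dots, I)$), but the two parametrizations are interchangeable; the one place you add value is in spelling out the rank argument that forces every $W_i$ in $L^{-1}(0)$ to be invertible, which the paper asserts without justification even though it is what makes the inverse map land in $(\GL_h)^{l-1}$.
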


Since $(\GL_h)^{l-1}$ has $2^{l-1}$ connected components and homeomorphisms preserve topological properties, $L^{-1}(0)$ also has $2^{l-1}$ connected components. 
Note that this number is independent of the network width, due to the fact that $GL_n(\mathbb{R})$) has two connected components regardless of $n$.
\begin{corollary}
\label{corollary:cc-full-rank-2layer}
    The minimum of $L$ has $2^{l-1}$ connected components.
\end{corollary}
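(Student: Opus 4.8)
The plan is to deduce Corollary~\ref{corollary:cc-full-rank-2layer} directly from Proposition~\ref{prop:homeo-minimum-GL} together with the structural facts about $GL_h(\mathbb{R})$ and about components of products that were assembled in the Preliminaries. The logical chain is short: (i) $L^{-1}(0)$ is homeomorphic to $(GL_h(\mathbb{R}))^{l-1}$; (ii) homeomorphisms preserve the number of connected components (Corollary~\ref{corollary:topological-invariance-of-connectedness}); (iii) the number of components of a product is the product of the numbers of components of the factors (Proposition~\ref{prop:connected-component-direct-product}); and (iv) $GL_h(\mathbb{R})$ has exactly two components, separated by the sign of the determinant, as recalled in the discussion of topological groups. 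Combining (iii) and (iv) gives $|\pi_0((GL_h(\mathbb{R}))^{l-1})| = 2^{l-1}$, and then (i)--(ii) transport this count to $L^{-1}(0)$, which is the minimum since $\min L = 0$.

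Concretely, first I would invoke Proposition~\ref{prop:homeo-minimum-GL} to fix a homeomorphism $\phi : L^{-1}(0) \to (GL_h(\mathbb{R}))^{l-1}$. Next I would compute $|\pi_0((GL_h(\mathbb{R}))^{l-1})|$: by Proposition~\ref{prop:connected-component-direct-product} this equals $\prod_{i=1}^{l-1} |\pi_0(GL_h(\mathbb{R}))|$, and each factor is $2$ because $\det : GL_h(\mathbb{R}) \to \mathbb{R}\setminus\{0\}$ is continuous and surjective onto a space with two components, while $GL_h^{+}$ and $GL_h^{-}$ are each connected (one can note $GL_h^{+}$ is path connected and $GL_h^{-}$ is a homeomorphic copy of it via left multiplication by a fixed matrix of negative determinant); hence the product has $2^{l-1}$ components. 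Finally I would apply Corollary~\ref{corollary:topological-invariance-of-connectedness} to $\phi$ to conclude that $L^{-1}(0)$ also has $2^{l-1}$ components, and observe that $L^{-1}(0)$ is exactly the minimum of $L$ since $\|\cdot\|_2^2 \geq 0$ and $L^{-1}(0) \neq \emptyset$ (the latter holding because $X$ has rank $h$, so $W_l \cdots W_1 = Y X^{-1}$ is solvable).

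There is no real obstacle here once Proposition~\ref{prop:homeo-minimum-GL} is granted; the statement is essentially a corollary-level bookkeeping step, which is why the paper labels it a corollary. The only point requiring the mildest care is the claim that $GL_h(\mathbb{R})$ has precisely two components rather than at most two: it is cited in the Preliminaries, but if one wanted a self-contained argument the path-connectedness of $GL_h^{+}(\mathbb{R})$ (e.g.\ via the polar/QR decomposition, contracting the positive-definite and rotation parts separately, or via elementary row operations) would be the substantive input. All of that can be treated as known. Thus the entire proof reduces to: $|\pi_0(L^{-1}(0))| = |\pi_0((GL_h(\mathbb{R}))^{l-1})| = 2^{l-1}$.
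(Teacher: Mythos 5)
Your proposal is correct and matches the paper's proof: both invoke Proposition~\ref{prop:homeo-minimum-GL} for the homeomorphism $L^{-1}(0)\cong (GL_h(\mathbb{R}))^{l-1}$, Corollary~\ref{corollary:topological-invariance-of-connectedness} to transport the component count, and Proposition~\ref{prop:connected-component-direct-product} together with $|\pi_0(GL_h(\mathbb{R}))|=2$ to get $2^{l-1}$. The extra justifications you add (why $GL_h(\mathbb{R})$ has exactly two components and why $L^{-1}(0)\neq\emptyset$) are fine but go slightly beyond what the paper spells out, which simply cites these as known.
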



\begin{figure}[t]
\begin{center}
\subfigure[Linear]{\includegraphics[width=0.45\columnwidth]{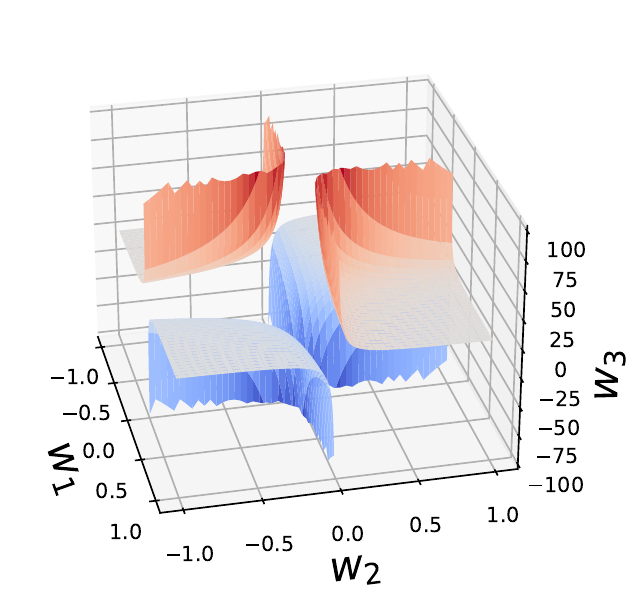}}
\subfigure[Resnet]{\includegraphics[width=0.45\columnwidth]{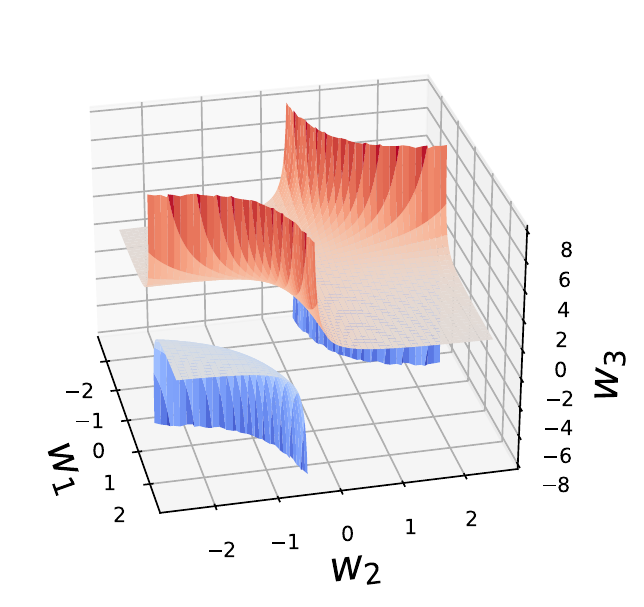}}
\end{center}
\caption{Minimum of (a) 3-layer linear net $|| Y - W_3 W_2 W_1 X||_2$ and (b) 3-layer linear net with a residual connection $|| Y - W_3 (W_2 W_1 X + X)||_2$, where $X=1$, $Y=1$, and $W_1, W_2, W_3 \in \R$.}
\label{fig:resnet-3-layer-linear}
\end{figure}

\subsection{ResNet with 1D weights}
The topological properties of the minimum set depend on the architecture. As an example of this dependency, we show that adding a skip connection changes the number of connected components of the minimum.

Consider a residual network $W_3 (W_2 W_1 X + \eps X)$ and loss function
\begin{align}
\label{eq:loss-multi-layer-linear-skip}
    L(W_3, W_2, W_1) = || Y - W_3 (W_2 W_1 X + \eps X)||_2,
\end{align}
where $(W_1, W_2, W_3) \in \Par = \R^{n \times n} \times \R^{n \times n} \times \R^{n \times n}$, $\eps \in \R$, and data $X \in \R^{n \times n}, Y \in R^{n \times n}$. 
The following proposition states that for a three-layer residual network with weight matrices of dimension $1 \times 1$, the number of components of the minimum is smaller than that of a linear network without the skip connection.

\begin{proposition}
\label{prop:cc-1d-resnet}
    Let $n = 1$. Assume that $X, Y \neq 0$. When $\eps = 0$, the minimum of $L$ has 4 connected components. When $\eps \neq 0$, the minimum of $L$ has 3 connected components. 
\end{proposition}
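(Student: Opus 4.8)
Because $X\neq 0$, dividing $W_3(W_2W_1X+\eps X)=Y$ by $X$ shows the minimum $L^{-1}(0)$ is the level surface
\[
M=\{(W_1,W_2,W_3)\in\R^3 : W_3(W_1W_2+\eps)=c\},\qquad c:=Y/X\neq 0 .
\]
The plan is to study $M$ directly in $\R^3$: for $\eps\neq 0$ the only continuous symmetry surviving the skip connection is $GL_1(\R)$ acting by $W_1\mapsto gW_1,\ W_2\mapsto g^{-1}W_2$, and it does not act transitively on $M$, so Corollary~\ref{corollary:connected-components-orbit-group-main} alone does not suffice. Two reductions help. First, $W_3\neq 0$ on $M$, so $M=M_+\sqcup M_-$ with $M_\pm=M\cap\{\pm W_3>0\}$; each $M_\pm$ is clopen in $M$, hence $|\pi_0(M)|=|\pi_0(M_+)|+|\pi_0(M_-)|$. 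Second, $(W_1,W_2,W_3)\mapsto(W_1,W_2,-W_3)$ is a homeomorphism from the $c$-surface onto the $(-c)$-surface, so I may assume $c>0$. For $\eps=0$ the surface is $\{W_1W_2W_3=c\}$, which is the minimum of the three-layer linear net with rank-one data, so Corollary~\ref{corollary:cc-full-rank-2layer} (with $l=3$) gives $2^{2}=4$ components.

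For $\eps\neq 0$ I would fiber each $M_\pm$ over the $W_3$-coordinate. Set $g(t)=c/t-\eps$, so the fiber over $W_3=t$ is $\{(W_1,W_2): W_1W_2=g(t)\}$. The key dichotomy: if $g(t)\neq 0$ this is a two-branch hyperbola (each branch a graph over a half-line of $W_1$, hence connected), while if $g(t)=0$ it is the union of the two coordinate axes, a \emph{connected} cross. Now $g(t)=0$ exactly when $t=c/\eps$, and since $c>0$ this value lies in $(0,\infty)$ when $\eps>0$ and in $(-\infty,0)$ when $\eps<0$; on the opposite half-line $c/t$ and $-\eps$ share a sign, so $g$ never vanishes there. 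Thus exactly one of $M_+,M_-$ contains a cross fiber.

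The count then runs as follows. On the side containing $c/\eps$: the cross fiber $F_0$ over $W_3=c/\eps$ is connected, and any point $(a,b,t)$ of that $M_\pm$ with $a\neq 0$ is joined to $F_0$ inside $M_\pm$ by the path $s\mapsto\big(a,\ g(\tau(s))/a,\ \tau(s)\big)$, where $\tau$ runs linearly from $t$ to $c/\eps$ (never crossing $0$, as both endpoints lie on one side); if $a=0$ then $g(t)=0$, so the point is already in $F_0$. Hence that side is path-connected and contributes $1$. On the side avoiding $c/\eps$: $g$ is nowhere zero, so $W_1\neq 0$ throughout, and the partition of $M_\pm$ by $\sign(W_1)$ consists of two nonempty relatively clopen pieces, each homeomorphic via $(W_1,W_3)\mapsto\big(W_1,\ g(W_3)/W_1,\ W_3\big)$ to a product of two open intervals, hence connected; that side contributes $2$. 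Adding, $|\pi_0(M)|=1+2=3$.

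The main obstacle is the careful bookkeeping in the $\eps\neq 0$ case: verifying that the cross fiber truly fuses the surrounding hyperbola branches into a single component (the explicit path above), and, on the side without a cross fiber, that the two $\sign(W_1)$ pieces are components of \emph{all} of $M$ and not merely of $M_\pm$ — this needs the observation that a path through $W_1=0$ would force $W_1W_2=0$, hence $W_3=c/\eps$, which is excluded on that side. The remaining steps (the fiber dichotomy, the homeomorphisms onto products of intervals, and the $\eps=0$ count if one prefers to do it by hand) are routine.
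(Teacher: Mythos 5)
Your proof is correct and takes a genuinely different route from the paper's. The paper parametrizes the minimum via the symmetry structure: it writes $L^{-1}(0)=S_1\cup S_0$, where $S_1$ is the image of a map $a:GL_1\times G_1\to\Par$ (with $G_1\cong GL_1$, so $S_1$ has four components $C_1,\dots,C_4$) and $S_0=\{W_1=0,\ W_3=Y(\eps X)^{-1}\}$ is the extra line created by the skip connection; it then shows via the limits $g_1\to 0^{\pm}$ that exactly the two components of $S_1$ sharing $\sign(W_3)=\sign(Y(\eps X)^{-1})$ have $S_0$ in their closure and hence get glued, while the other two do not. You instead fiber the surface $W_3(W_1W_2+\eps)=c$ over the $W_3$-coordinate, split $M$ into $M_\pm$ by $\sign(W_3)$, and exploit the dichotomy between two-branch hyperbola fibers and the single cross fiber at $W_3=c/\eps$: the cross fuses its side into one component, and the other side splits cleanly into two by $\sign(W_1)$. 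The decompositions are in fact closely related---your cross fiber is the union of the paper's $S_0$ (the $W_1=0$ axis) with the $W_2=0$ axis (which sits inside $S_1$), and the paper's $C_1\cup C_2\cup S_0$ coincides with your connected side---but yours is more elementary and, for this scalar case, arguably easier to verify, since the explicit fibered path to the cross replaces the paper's closure/limit argument. The trade-off is that the paper's argument keeps the group action front and center, in line with its program of reading topology off symmetry, whereas your coordinate decomposition is tailored to $n=1$ and would not transfer as transparently to higher dimensions.
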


The $\eps = 0$ case follows from Corollary \ref{corollary:cc-full-rank-2layer}. For the $\eps \neq 0$ case, the proof decomposes the minimum of $L$ into two sets $S_1$ and $S_0$, corresponding to the minima without the skip connection and an extra set of solutions because of the skip connection.
$S_1$ is homeomorphic to $GL_1 \times GL_1$ and has 4 connected components. $S_0$ is a line and has 1 connected component. Two components of $S_1$ are connected to $S_0$, while the other two components of $S_1$ are not. Therefore, $S_0$ connects two components of $S_1$. As a result, the minimum of $L$ has 3 connected components.

Figure \ref{fig:resnet-3-layer-linear} visualizes the minimum without and with the skip connection. This result reveals the effect of skip connection on the connectedness of the set of minima, which may lead to a new explanation of the effectiveness of ResNets \citep{he2016deep} and DenseNets \citep{huang2017densely}. We leave the connection between the topology of the minimum and the optimization and generalization properties of neural networks to future work.


\section{Mode Connectivity}
\label{sec:mode-connectivity}
The previous section counts the connected components of the minimum and shows that the connectedness of the minimum is related to the symmetry of the loss function under certain conditions. 
In this section, we use this insight to explain recent empirical observations that with high probability two points in the minimum are connected, i.e. there is a large connected component.
Proofs of this section appears in Appendix \ref{appendix:mode-connectivity}.

Mode connectivity refers to the phenomenon that there exist high accuracy or low loss paths between two minima found by stochastic gradient descent \citep{garipov2018loss}. 
Linear mode connectivity occurs when all points on the linear interpolation between two minima have low loss values. More recently, permutation of neurons is usually performed to align the two minima before evaluating linear mode connectivity \citep{entezari2021role, ainsworth2022git}. 
We use the term mode connectivity when we consider arbitrary curves and will specify linear mode connectivity when only linear interpolation is considered.

\subsection{Mode connectivity up to permutation}
\label{sec:mode-connectivity-permutation}
For the family of linear neural networks defined in Section \ref{sec:connectedness-full-rank-linear}, we show that permutations allow us to connect points in the minimum that are not connected without permutation. Our results support the empirical observation that neuron alignment by permutation improves mode connectivity \citep{tatro2020optimizing}. 

Consider again the linear network \eqref{eq:loss-multi-layer-linear} with invertible weights. When $l=2$, the minimum of $L$ has two connected components corresponding to the two connected components of the $GL$ group.  Any $g \in GL$ that is not on the identity component can take a point on one connected component of the minimum to the other.
\begin{lemma}
\label{lemma:gl-full-rank-2layer}
    Consider two points $(W_1, W_2), (W_1', W_2') \in L^{-1}(0)$ that are not connected in $L^{-1}(0)$. For any $g \in GL(h)$ such that $\det(g) < 0$, $g \cdot (W_1, W_2)$ and $(W_1', W_2')$ are connected in $L^{-1}(0)$.
\end{lemma}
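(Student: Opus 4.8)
The plan is to exploit the homeomorphism from Proposition~\ref{prop:homeo-minimum-GL}, which for $l=2$ identifies $L^{-1}(0)$ with $\GL_h(\R)$ via the action $g \cdot (W_1, W_2) = (gW_1, W_2 g^{-1})$ applied to some fixed base point. Under this identification, a point $(W_1, W_2)$ with $Y = W_2 W_1 X$ corresponds to the unique $g \in \GL_h(\R)$ with $W_1 = g W_1^0$ and $W_2 = W_2^0 g^{-1}$ for the chosen base solution $(W_1^0, W_2^0)$. The two connected components of $L^{-1}(0)$ then correspond exactly to the sign of $\det(g)$: the identity component of $\GL_h(\R)$ (positive determinant) and the other component (negative determinant). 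So two minima lie in the same component iff their associated group elements have the same determinant sign.

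First I would make this correspondence explicit: fix the base point, say $(W_1^0, W_2^0) = (X^{-1} W_1^0, \ldots)$—actually more cleanly, note any minimum can be written as $(W_1, W_2)$ with $W_2 W_1 = Y X^{-1}$, and given two minima $(A_1, A_2)$, $(B_1, B_2)$ there is a unique $h \in \GL_h(\R)$ with $B_1 = h A_1$, $B_2 = A_2 h^{-1}$ (namely $h = B_1 A_1^{-1}$, and one checks $A_2 h^{-1} = A_2 A_1 B_1^{-1} = Y X^{-1} B_1^{-1} = B_2$ using invertibility of all matrices involved—this is where the full-rank assumption on $X, Y$ and the invertibility of the weights enters). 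Second, I would invoke that $(W_1, W_2)$ and $(W_1', W_2')$ being \emph{not} connected in $L^{-1}(0)$ forces the connecting group element $h = W_1' W_1^{-1}$ to have $\det(h) < 0$, since path-connectedness within a component of $\GL_h(\R)$ would otherwise give a path in the minimum. Third, I would compute the group element connecting $g \cdot (W_1, W_2)$ to $(W_1', W_2')$: it is $W_1' (g W_1)^{-1} = W_1' W_1^{-1} g^{-1} = h g^{-1}$, which has determinant $\det(h)/\det(g) > 0$ since both $\det(h) < 0$ and $\det(g) < 0$. Finally, a group element of positive determinant lies in the identity component of $\GL_h(\R)$, which is path-connected, and transporting that path through the continuous $\GL_h(\R)$-action yields a continuous path in $L^{-1}(0)$ from $g \cdot (W_1, W_2)$ to $(W_1', W_2')$; path-connectedness implies connectedness, so the two points are connected in $L^{-1}(0)$.

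The main obstacle, though a mild one, is handling the bookkeeping of the group action and the base point carefully enough to extract the determinant-sign invariant cleanly—in particular verifying that the element $h$ connecting any two minima really is well-defined and invertible, which relies on all the weight matrices and the data $X, Y$ being full rank so that products like $W_1' W_1^{-1}$ make sense and so that $\det(h) \ne 0$. A secondary point worth stating explicitly is \emph{why} non-connectedness of two minima implies $\det(h) < 0$ rather than just $\det(h) \ne 0$: this is the contrapositive of the statement that if $\det(h) > 0$ then $h$ lies in the (path-)connected identity component, whose image under the action is a path between the two minima. Once these are in place the determinant computation $\det(hg^{-1}) = \det(h)\det(g)^{-1} > 0$ is immediate, and the conclusion follows from path-connectedness of $\GL_h(\R)^+$ together with continuity of the action, exactly as in Theorem~\ref{theorem:main-theorem-on-connectedness-main}.
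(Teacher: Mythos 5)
Your proposal is correct and follows essentially the same approach as the paper's proof: both transfer the problem to $\GL_h(\R)$ via the correspondence from Proposition~\ref{prop:homeo-minimum-GL}, use the determinant sign as the invariant separating the two components, and conclude by multiplying determinants and invoking continuity of the action. The only cosmetic difference is that you work with the relative group element $h = W_1' W_1^{-1}$ connecting the two minima directly, whereas the paper pulls both points back to $\GL_h$ via $f^{-1}$ and compares the resulting $g, g'$ — these are equivalent since $h = g' g^{-1}$.
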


When the hidden dimension $h\geq 2$, there exists a permutation $g$ such that $\det(g) > 0$, and a permutation $g$ such that $\det(g) < 0$. Therefore, Lemma \ref{lemma:gl-full-rank-2layer} implies the following result that all points on the minimum of $L$ are connected up to permutation. 
\begin{proposition}
\label{prop:permutation-full-rank-2layer}
    Assume that $h \geq 2$. For all $(W_1, ..., W_l), (W_1', ..., W_l') \in L^{-1}(0)$, there exists a list of permutation matrices $P_1, ..., P_{l-1}$ such that $(W_1 P_1, P_1^{-1} W_2 P_2, ..., P_{l-2} W_{l-1} P_{l-1}, P_{l-1} W_l)$ and $(W_1', ..., W_l')$ are connected in $L^{-1}(0)$.
\end{proposition}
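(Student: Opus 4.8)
The plan is to bootstrap Proposition~\ref{prop:permutation-full-rank-2layer} from the $l=2$ case (Lemma~\ref{lemma:gl-full-rank-2layer}) by an inductive argument on the number of layers, using the key observation that in dimension $h\ge 2$ the permutation group $S_h$ contains elements of both positive and negative determinant (e.g. the identity has $\det = +1$ and any transposition has $\det = -1$). First I would recall, from Proposition~\ref{prop:homeo-minimum-GL} and Corollary~\ref{corollary:cc-full-rank-2layer}, that $L^{-1}(0)$ is homeomorphic to $(\GL_h)^{l-1}$, and that the homeomorphism can be taken compatible with the $\GL_h(\R)^{l-1}$-action: concretely, a point in $L^{-1}(0)$ is determined by the tuple $(g_1,\dots,g_{l-1})\in(\GL_h)^{l-1}$ expressing the change of basis relative to a fixed reference factorization of the rank-$h$ map $Y X^{-1}$, and the group action $g\cdot(W_1,\dots,W_l)=(g_1 W_1,\dots,W_l g_{l-1}^{-1})$ translates to coordinatewise left-multiplication on $(\GL_h)^{l-1}$. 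Under this identification, two points lie in the same connected component iff the signs of the determinants $(\sign\det g_1,\dots,\sign\det g_{l-1})$ agree, since $\GL_h$ has exactly the two components distinguished by the sign of the determinant (Proposition~\ref{prop:connected-component-direct-product}).

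Next I would reduce the statement to a sign-matching problem. Given $(W_1,\dots,W_l)$ and $(W_1',\dots,W_l')$ in $L^{-1}(0)$ with coordinates $(g_1,\dots,g_{l-1})$ and $(g_1',\dots,g_{l-1}')$ respectively, right-multiplying $W_1$ by $P_1$, conjugating $W_2$ by $P_1,P_2$, etc., as in the statement, changes the coordinate tuple by $g_i \mapsto g_i P_i^{\pm 1}$ (I would check the exact placement of inverses, but since $P_i$ is a permutation matrix, $\det P_i^{-1}=\det P_i=\pm1$, so only the sign matters). Hence the permuted point is connected to $(W_1',\dots,W_l')$ precisely when $\sign\det(g_i P_i) = \sign\det(g_i')$ for every $i$, i.e. $\sign\det P_i = \sign\det g_i \cdot \sign\det g_i'$ for each $i$ independently. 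Since $h\ge 2$, for each $i$ we may choose $P_i$ to be either the identity or a transposition according to whether the required sign is $+1$ or $-1$; this is exactly the content of applying Lemma~\ref{lemma:gl-full-rank-2layer} in each of the $l-1$ coordinate factors, combined with Proposition~\ref{prop:connected-component-direct-product} to conclude that matching all signs puts the two points in the same component of the product.

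I expect the main obstacle to be purely bookkeeping rather than conceptual: verifying that the coordinate description of $L^{-1}(0)$ from Proposition~\ref{prop:homeo-minimum-GL} really is equivariant in the precise sense needed, and tracking how the permutation insertions $(W_1 P_1, P_1^{-1}W_2 P_2,\dots)$ act on those coordinates (in particular getting the inverses and the order of multiplication right so that the action is well-defined and the sign computation is clean). Once that is pinned down, the argument is a one-line consequence of Lemma~\ref{lemma:gl-full-rank-2layer} applied factorwise together with the fact that $\det S_h$ is surjective onto $\{\pm1\}$ for $h\ge 2$. A secondary point worth stating explicitly is that the curve realizing the connection lives inside $L^{-1}(0)$ because both the $\GL_h(\R)^{l-1}$-action and the permutation insertions preserve the network function $W_l\cdots W_1$, hence preserve $L$; path-connectedness within each component of $(\GL_h)^{l-1}$ then transports back to a path in the minimum via the homeomorphism.
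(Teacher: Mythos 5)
Your strategy coincides with the paper's: pass through the homeomorphism $f$ of Proposition~\ref{prop:homeo-minimum-GL}, identify a connected component of $L^{-1}(0)$ with a tuple of determinant signs in $(\GL_h)^{l-1}$, and use that $\det\colon S_h\to\{\pm1\}$ is onto for $h\ge 2$ to flip those signs by permutation. The substantive gap is the claim that the permutation insertion changes the coordinate tuple by $g_i\mapsto g_i P_i^{\pm1}$, from which you conclude that each $P_i$ may be chosen independently via $\sign\det P_i=\sign\det g_i\cdot\sign\det g_i'$. In fact, under $f^{-1}(W_1,\dots,W_l)=(W_1X,W_2,\dots,W_{l-1})$, the permuted point has coordinates $g_i''=P_i\,g_i\,P_{i-1}^{-1}$ (with $P_0=I$), so each $P_i$ appears in \emph{two} consecutive coordinate factors, giving $\sign\det g_i''=\sign\det P_i\cdot\sign\det g_i\cdot\sign\det P_{i-1}$. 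Your caveat about ``exact placement of inverses'' does not reach this: the issue is not $P_i$ versus $P_i^{-1}$ (whose determinants agree, as you say) but the presence of $P_{i-1}$ in the $i$-th coordinate.

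Because of this coupling, the sign-matching equations $\sign\det P_i\cdot\sign\det P_{i-1}=\sign\det g_i\cdot\sign\det g_i'$ form a lower-triangular system that must be solved recursively from $P_0=I$, which is exactly what the paper's proof does through the criterion $\det(g_i g_i' P_{i-1}^{-1})>0$. Your independent rule can fail: take $l=3$ with $\sign\det g_1=-$, $\sign\det g_1'=+$, $\sign\det g_2=\sign\det g_2'=+$; your rule sets $P_1$ odd and $P_2$ even, yielding $\sign\det g_2''=\sign\det P_2\cdot\sign\det g_2\cdot\sign\det P_1=-$, which does not match $\sign\det g_2'$. The triangular system always solves, so the overall approach is sound, but the ``for each $i$ independently'' step needs to be replaced by the recursion.
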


The results above are examples where a larger part of the minimum becomes connected after a permutation. More generally, permutation improves mode connectivity in cases where an orbit is not connected due to the symmetry group comprising multiple connected components, the orbit does not reside on the same connected component of the minimum, and there exists a permutation that takes a point on one connected component of the group to another.


\subsection{Failure case of linear mode connectivity}
\label{sec:failure-case-linear-mode-connectivity}
As an application of obtaining new minima from old ones using symmetries, we show that linear mode connectivity fails to hold in multi-layer regressions.
The following proposition says that in neural networks with a homogeneous activation (such as leaky ReLU) between the last two layers, the error barrier in the linear interpolation between two solutions can be arbitrarily large.

\begin{proposition}
\label{prop:lcc-failure-homogeneous}
    Consider a loss function of the following form 
    \begin{align}
        L:~ &\Par \to \R, W = (W_1, ..., W_l) \mapsto \cr
        &|| Y - W_l \sigma(W_{l-1} f(W_{l-2}, W_{l-3}, ..., W_1, X))||^2_2,
    \end{align}
    where $f$ is a function of $W_{l-2}, W_{l-3}, ..., W_1, X$, and $\sigma(cz) = c^k \sigma(z)$ for all $c \in \R$ and some $k > 0$.
    Assume that $||Y||_2 \neq 0$ and $L^{-1}(0) \neq \emptyset$.
    Also assume that $l \geq 2$.
    For any positive number $b > 0$, there exist $W, W' \in L^{-1}(0)$ that belong to the same connected component of $L^{-1}(0)$ and $0 < \alpha < 1$, such that $L\pa{(1-\alpha) W + \alpha W'} > b$.
\end{proposition}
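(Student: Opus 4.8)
The plan is to exploit the one-parameter rescaling symmetry attached to the homogeneous activation $\sigma$: through a point of $L^{-1}(0)$ it produces a connected curve of zero-loss parameters that is unbounded in the scaling of the last two layers, and since straight-line interpolation is affine it must leave any bounded-loss region; because $\|Y\|_2 \neq 0$ the resulting barrier can be made as large as we like. Concretely, I would first pick any $W=(W_1,\dots,W_l)\in L^{-1}(0)$ (nonempty by assumption) and abbreviate $z=f(W_{l-2},\dots,W_1,X)$, so that $Y=W_l\,\sigma(W_{l-1}z)$. The key structural fact is that $f$ depends only on $W_1,\dots,W_{l-2}$ and $X$ — not on $W_{l-1}$ or $W_l$ (with $l\ge 2$ ensuring both layers exist) — so for every $c>0$ homogeneity gives
\[
 \left(c^{-k}W_l\right)\sigma\left(cW_{l-1}z\right)=c^{-k}c^{k}\,W_l\,\sigma(W_{l-1}z)=Y,
\]
hence $W'(c):=(W_1,\dots,W_{l-2},\,cW_{l-1},\,c^{-k}W_l)\in L^{-1}(0)$. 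The map $c\mapsto W'(c)$ from $(0,\infty)$ into $L^{-1}(0)$ is continuous and $(0,\infty)$ is connected, so by Theorem~\ref{theorem:main-theorem-on-connectedness-main} its image is connected; since $W=W'(1)$ lies in that image, $W$ and $W'(c)$ belong to the same connected component of $L^{-1}(0)$ for all $c>0$.

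Next I would evaluate $L$ along the segment from $W$ to $W'(c)$. The interpolant $(1-\alpha)W+\alpha W'(c)$ coincides with $W$ in layers $1,\dots,l-2$ and has $(l{-}1)$-th and $l$-th layers $aW_{l-1}$ and $dW_l$ with $a:=1-\alpha+\alpha c>0$ and $d:=1-\alpha+\alpha c^{-k}$; since the input to $\sigma$ is only rescaled by $a$, homogeneity gives network output $d\,W_l\,\sigma(aW_{l-1}z)=da^{k}\,W_l\,\sigma(W_{l-1}z)=da^{k}Y$, so
\[
 L\left((1-\alpha)W+\alpha W'(c)\right)=\left(1-da^{k}\right)^{2}\|Y\|_2^{2}.
\]
Fixing $\alpha=\tfrac12$ gives $a=(1+c)/2\to\infty$ and $d=(1+c^{-k})/2\to\tfrac12$ as $c\to\infty$ (using $k>0$), so $da^{k}\to\infty$ and the right-hand side diverges; choosing $c$ large enough makes it exceed $b$, completing the argument with this $W$, $W'=W'(c)$ and $\alpha=\tfrac12$.

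I do not anticipate a genuine obstacle. The only care needed is bookkeeping: verifying that $f$, and hence the pre-activation up to the scalar $a$, is untouched by rescaling $W_{l-1}$; restricting to $c>0$ so that $c^{k}$, $c^{-k}$ and the homogeneity identity are unambiguous (leaky ReLU is only positively homogeneous anyway); and establishing same-component membership through the explicit continuous path rather than any transitivity statement, since the symmetry group here acts on a level set without acting transitively on it. The conceptual content is simply that the symmetry orbit through $W$ is unbounded in parameter space while linear interpolation is affine, so it cannot stay inside any bounded-loss sublevel set.
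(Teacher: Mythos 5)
Your proof is correct and follows the same essential strategy as the paper: use the rescaling symmetry of the homogeneous activation to construct a second zero-loss point $W'=(W_1,\dots,W_{l-2},\,cW_{l-1},\,c^{-k}W_l)$ in the orbit of $W$, then compute the loss on the midpoint of the segment and let the scaling parameter blow up. Your computation $(1-da^k)^2\|Y\|_2^2$ with $a=(1+c)/2$, $d=(1+c^{-k})/2$ matches the paper's formula with $m=c$ and $\alpha=\tfrac12$.

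However, you actually repair a flaw in the paper's argument for same-component membership. The paper asserts that $W$ and $W'$ are connected in $L^{-1}(0)$ ``by the curve'' $\gamma(t)=\bigl((1-t)W_l+tW_lm^{-k},\ (1-t)W_{l-1}+tmW_{l-1},\ W_{l-2},\dots,W_1\bigr)$, which is the straight-line segment between them — but that segment is precisely the path on which the loss is being shown to be large, so it does not lie in $L^{-1}(0)$ and cannot certify same-component membership. You instead use the orbit curve $c\mapsto W'(c)$, $c\in(0,\infty)$, which stays inside $L^{-1}(0)$ at every $c$ because rescaling is an exact symmetry; that is the correct way to conclude $W=W'(1)$ and $W'(c)$ lie in the same component. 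You are also more careful than the paper in restricting to $c>0$ so that $c^k$, $c^{-k}$ and the homogeneity identity are unambiguous, which matters since leaky ReLU is only positively homogeneous (and $c^k$ is not even real-valued for negative $c$ and non-integer $k$). Your limiting argument ($da^k\to\infty$ as $c\to\infty$) replaces the paper's explicit but typo-ridden choice of $m$; both establish the unbounded barrier, yours more cleanly.
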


\begin{figure}[ht]
\centering
\includegraphics[width=0.8\columnwidth]{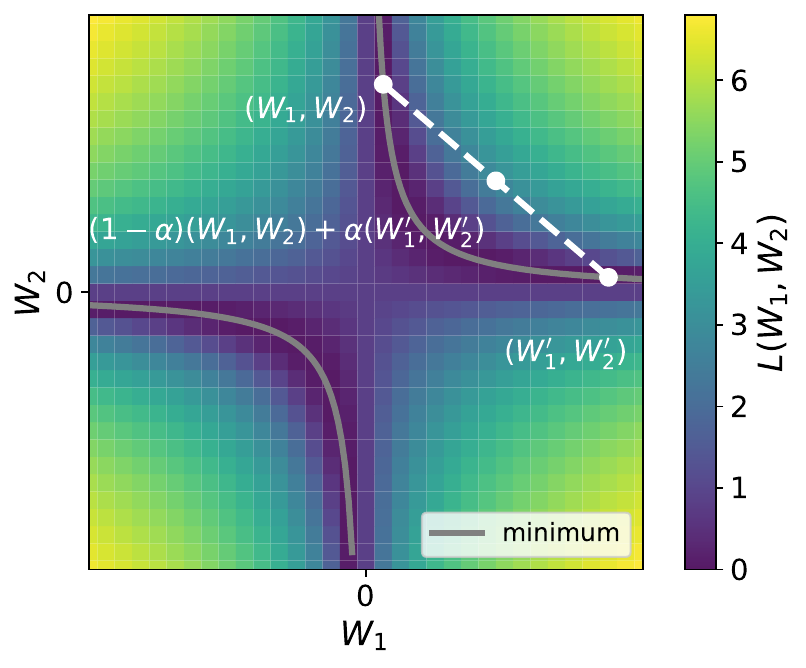}
\caption{Interpolation between 2 minima of loss function $L(W_1, W_2) = || Y - W_2 W_1 X||_2$ with 1 dimensional weights. Loss on the interpolation can be unbounded. }
\label{fig:MC_linear_2D_unbounded}
\end{figure}


The proof constructs a new point on the minimum from an existing one using the rescaling symmetry of homogeneous functions. The two points can be far apart since the orbit of this group action is unbounded.
To provide intuition, Figure \ref{fig:MC_linear_2D_unbounded} visualizes the two points on the minimum of a two-layer network with weights of dimension $1 \times 1$ and the linear interpolation between them. The linear network used is a special case of a homogeneous network.
Note that our result here does not contradict with the layer-wise connectivity result in \cite{adilovalayer}, as more than one layer of the two minima are different.

The loss function considered in Proposition \ref{prop:lcc-failure-homogeneous} is significantly more general than those in Section \ref{sec:mode-connectivity-permutation}. For the architecture, we only require the presence of a rescaling symmetry in the last two layers, and $f$ can be any neural network with any activation. 
Other assumptions of the proposition are also not excessively restrictive, as the labels $Y$ are rarely all zero, and there usually exists a minimum in common machine learning tasks.

Proposition \ref{prop:lcc-failure-homogeneous} extends to cases where we allow certain permutations. The following proposition states that under additional assumptions, the error barrier in the linear interpolation is unbounded even with neuron permutations. 
The proof construction is similar to that of Proposition \ref{prop:lcc-failure-homogeneous}.

Let $S_n$ be the set of $n \times n$ permutation matrices, where $n$ is the number of columns of $W_l$.
\begin{proposition}
\label{prop:lcc-failure-homogeneous-permute}
    Consider the loss function with the same set of assumptions in Proposition \ref{prop:lcc-failure-homogeneous}.
    Assume additionally that there does not exist a permutation $P$ such that every column of $P \sigma(W_{l-1} f(W_{l-2}, W_{l-3}, ..., W_1, X))$ is in the null space of $W_l$.
    For any positive number $b > 0$, there exist $(W_1, ..., W_l), (W_1', ..., W_l') \in L^{-1}(0)$ and $0 < \alpha < 1$, such that $(W_1, ..., W_{l-2}) = (W_1', ..., W_{l-2}')$ and 
    \begin{align*}
        \min_{P \in S_n} L \big( (&1-\alpha) (W_1, ..., W_l) \\
        &+ \alpha (W_1, ..., W_{l-2},P^{-1} W_{l-1}, W_l P)\big) > b.
    \end{align*}
\end{proposition}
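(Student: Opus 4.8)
The plan is to re-use the rescaling construction from the proof of Proposition~\ref{prop:lcc-failure-homogeneous} and then check that the blow-up it produces survives \emph{every} neuron permutation of the last two layers. Fix any $W = (W_1,\dots,W_l)\in L^{-1}(0)$ (nonempty by hypothesis) and set $u = W_{l-1}\,f(W_{l-2},\dots,W_1,X)$ and $v = \sigma(u)$, so that $W_l v = Y$. For a scale $c>0$, let $W' = (W_1,\dots,W_{l-2},\,cW_{l-1},\,c^{-k}W_l)$; positive homogeneity gives $c^{-k}W_l\,\sigma(cW_{l-1}f(\dots)) = W_l\,\sigma(W_{l-1}f(\dots)) = Y$, so $W'\in L^{-1}(0)$ and $W'$ agrees with $W$ on the first $l-2$ layers, as the statement requires. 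Fix $\alpha\in(0,1)$, say $\alpha=\tfrac12$. For $P\in S_n$ the second endpoint $(W_1,\dots,W_{l-2},\,P^{-1}W'_{l-1},\,W'_l P)$ is the $P$-neuron-permutation of $W'$ and hence also lies in $L^{-1}(0)$. The interpolant $(1-\alpha)W + \alpha(\cdots)$ keeps layers $1,\dots,l-2$ fixed, has layer $l-1$ equal to $M_P W_{l-1}$ with $M_P := (1-\alpha)I + \alpha c P^{-1}$, and layer $l$ equal to $W_l\big((1-\alpha)I + \alpha c^{-k}P\big)$, so its output on $X$ is $\Phi_P(c) := W_l\big((1-\alpha)I + \alpha c^{-k}P\big)\,\sigma(M_P u)$ (the first $l-2$ layers contribute the same $f(\cdots)$ as before).

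The crux is to show $\|\Phi_P(c)\|_2\to\infty$ as $c\to\infty$ for each fixed $P$. The $(i,j)$ entry of $M_P u$ is $\alpha c\,(P^{-1}u)_{ij} + (1-\alpha)u_{ij}$, and the identity $\sigma(cz+w)=c^k\sigma(z+w/c)$ together with continuity of $\sigma$ (a consequence of positive homogeneity, since $\sigma(t)=\sigma(1)t^k$ for $t>0$, $\sigma(t)=\sigma(-1)|t|^k$ for $t<0$, and $\sigma(0)=0$) gives $\sigma(M_P u)/c^k\to\alpha^k\sigma(P^{-1}u)=\alpha^k P^{-1}v$ entrywise, hence in any norm since the matrices have fixed size. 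As $(1-\alpha)I + \alpha c^{-k}P\to(1-\alpha)I$ and matrix multiplication is continuous, $\Phi_P(c)/c^k\to(1-\alpha)\alpha^k\,W_l P^{-1}v$. The extra hypothesis says precisely that $W_l P v\ne 0$ for every $P\in S_n$, equivalently $W_l P^{-1}v\ne 0$ for every $P$, so this limit is a nonzero matrix; thus $\|\Phi_P(c)\|_2$ is asymptotic to a positive multiple of $c^k$, and $L$ at the $P$-aligned interpolant is at least $\big(\|\Phi_P(c)\|_2-\|Y\|_2\big)^2\to\infty$.

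Finally, since $S_n$ is finite, for a given $b>0$ we choose for each $P$ a threshold $c_P$ beyond which the loss along the $P$-aligned interpolant exceeds $b$, then take $c>\max_{P\in S_n}c_P$; for this $c$ every term inside $\min_{P\in S_n}L(\cdot)$ exceeds $b$, so $(W,W',\alpha)$ witnesses the claim. The step needing the most care is this core limit: because $\sigma$ is nonlinear, the interpolated pre-activation $M_P u$ is not a scalar multiple of any permuted copy of $u$ (the identity and permutation parts of $M_P$ coexist), so one cannot pull $\sigma$ through directly; the homogeneity rewriting $\sigma(cz+w)=c^k\sigma(z+w/c)$ applied entrywise is what makes the normalization by $c^k$ work, and finiteness of $S_n$ is what upgrades per-$P$ divergence into divergence of the minimum. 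A minor secondary point is to record that ``no $P$ sends every column of $v$ into $\ker W_l$'' is literally ``$W_l P v\ne0$ for all $P$'', since the $j$-th column of $W_l P v$ is $W_l$ applied to the $j$-th column of $Pv$.
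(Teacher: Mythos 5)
Your proof is correct and follows essentially the same route as the paper's: rescale the last two layers to get $W'$, use positive homogeneity to extract the $c^k$ factor from $\sigma$ of the interpolated pre-activation, and invoke the null-space hypothesis to conclude the normalized limit $W_l P^{-1} v$ is nonzero for every $P$. The only differences are cosmetic: you work with general $\alpha\in(0,1)$ rather than fixing $\alpha=\tfrac12$, and you explicitly note the continuity of $\sigma$ and the finiteness of $S_n$ used to pass from per-$P$ divergence to divergence of the minimum, both of which the paper leaves implicit.
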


By including permutation, the setting in Proposition \ref{prop:lcc-failure-homogeneous-permute} is closer to the setting in which linear mode connectivity is empirically observed. However, the permutation in Proposition \ref{prop:lcc-failure-homogeneous-permute} is restricted to the first two layers, which does not rule out the possibility of lowering the loss barrier by including permutations of other neurons. 

The proofs of Proposition \ref{prop:lcc-failure-homogeneous} and \ref{prop:lcc-failure-homogeneous-permute} depend on the rescaling symmetry of homogenenous activation functions. For other activations with known symmetries, similar results may be derived as using the large set of minimum obtained from the group action.
Whether the loss barrier on the linear interpolation is bounded can depend on the compactness of the symmetry group and the curvature of the minimum. We leave a systematic investigation of the condition for linear mode connectivity to future work.



One possible reason why linear mode connectivity is observed in practice despite Proposition \ref{prop:lcc-failure-homogeneous-permute} is that only a small part of the minima is reachable by stochastic gradient descent due to implicit bias \citep{min2021explicit}, as other optimizers have been observed to find less connected minima \citep{altintas2023disentangling}.


\subsection{Linear mode connectivity of orbits}
\label{sec:lmc-orbits}
Symmetry accounts for a large part of the set of minima. In particular, given a known minimum $x$, the orbit of $x$ defines a set of points that are also minima.
Although not all minima are on the same orbit of known symmetries, each orbit often contains a nontrivial set of minima. In this section, we examine the error barrier of linear interpolations of minima restricted to an orbit of parameter symmetries. 

When the architecture contains a multiplication of two weight matrices $W_2W_1$, where $W_2 \in \R^{m \times h}, W_1 \in \R^{h \times n}$, there is a $GL_h$ symmetry that acts on $(W_1, W_2)$ by $g \cdot (W_1, W_2) = (g W_1,  W_2 g^{-1})$ for $g \in GL_h$. 
The following proposition states that a point on the linear interpolation of two points in the same orbit can be far away from the orbit.
\begin{proposition}
\label{prop:lcc-failure-W1W2}
    Let $A \in \R^{n \times n}$ be an invertible matrix.
    Let set $S = \{(W_1, W_2) : W_1, W_2 \in \R^{n \times n}, W_1 W_2 = A \}$.
    For any positive number $b > 0$, there exist $W', W'' \in S$ and $0 < \alpha < 1$, such that $\min_{\hat{W} \in S}\|\pa{(1-\alpha) W' + \alpha W''} - \hat{W}\|_2 > b$.
\end{proposition}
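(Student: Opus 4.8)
The plan is to exploit that $S$ is nothing but a single orbit of the rescaling action $g\cdot(W_1,W_2)=(gW_1,\,W_2g^{-1})$ of $\GL_n$, and that this orbit is so strongly \emph{non}-convex that the midpoint of a suitably chosen chord runs away from it without bound. Fix a scalar $c>0$, to be chosen large at the end in terms of $b$ and $A$, and set $W'=(cI,\,c^{-1}A)$ and $W''=(c^{-1}I,\,cA)$; both lie in $S$ since $cI\cdot c^{-1}A=A=c^{-1}I\cdot cA$. Take $\alpha=\tfrac12$, so the interpolant is $M:=\tfrac12(W'+W'')=(\lambda I,\,\lambda A)$ with $\lambda:=\tfrac{c+c^{-1}}{2}\ge 1$, which grows without bound as $c\to\infty$. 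The proposition then reduces to showing that $\inf_{\hat W\in S}\|M-\hat W\|_2$ can be forced to exceed $b$.

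For the core estimate, take an arbitrary $\hat W=(\hat W_1,\hat W_2)\in S$ and write $E_1=\hat W_1-\lambda I$, $E_2=\hat W_2-\lambda A$, so $r:=\|(E_1,E_2)\|_2=\sqrt{\|E_1\|_2^2+\|E_2\|_2^2}$ is the distance from $M$ to $\hat W$. Expanding the constraint $(\lambda I+E_1)(\lambda A+E_2)=A$ gives $\lambda E_1A+\lambda E_2+E_1E_2=(1-\lambda^2)A$. Taking norms and using submultiplicativity and absolute homogeneity of $\|\cdot\|_2$, together with $\|E_1\|_2,\|E_2\|_2\le r$ and $\|E_1\|_2\|E_2\|_2\le r^2$, yields
\[
(\lambda^2-1)\,\|A\|_2\;\le\;\lambda r\,\|A\|_2+\lambda r+r^2\;=\;\lambda r\,(1+\|A\|_2)+r^2 .
\]
Hence $\max\{\lambda r(1+\|A\|_2),\,r^2\}\ge\tfrac12(\lambda^2-1)\|A\|_2$, which (using $\lambda^2-1\ge\lambda-1$ and $\lambda-\lambda^{-1}\ge\lambda-1$ for $\lambda\ge 1$) forces
\[
r\;\ge\;\min\!\Big\{\,\tfrac{(\lambda-1)\|A\|_2}{2\,(1+\|A\|_2)},\ \ \sqrt{\tfrac{(\lambda-1)\|A\|_2}{2}}\,\Big\} .
\]
Both lower bounds tend to infinity with $\lambda$ (note $\|A\|_2>0$ since $A$ is invertible), so choosing $c$ large enough that this minimum exceeds $b$ gives $\|M-\hat W\|_2>b$ for \emph{every} $\hat W\in S$; therefore $\min_{\hat W\in S}\|\tfrac12(W'+W'')-\hat W\|_2>b$, the minimum being attained because $\hat W\mapsto\|M-\hat W\|_2$ is continuous and coercive on the closed set $S$.

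The only thing requiring care is the norm-bookkeeping in the two displays — fixing a submultiplicative, absolutely homogeneous norm on each factor (Frobenius, equivalently the $\ell_2$ norm of the vectorized parameter, which matches the statement's $\|\cdot\|_2$) — and turning the quadratic-in-$r$ inequality into an honest lower bound on $r$; there is no deeper obstacle. The geometric mechanism is transparent: along the chord the midpoint's first factor has grown to $\lambda I$ and its second to $\lambda A$, so their product has grown like $\lambda^2 A$, while any point of $S$ near $M$ must have product exactly $A$ — a discrepancy of order $\lambda^2$ that cannot be absorbed by a perturbation of size $o(\lambda)$. This is the same rescaling phenomenon behind Propositions~\ref{prop:lcc-failure-homogeneous} and~\ref{prop:lcc-failure-homogeneous-permute}, here phrased intrinsically as the failure of a single symmetry orbit to be even approximately convex.
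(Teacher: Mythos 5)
Your proof is correct, and it takes a genuinely different route from the paper's at the key step. Both use the same construction of the endpoints (rescale by $cI$ and $c^{-1}I$ so that the midpoint is $(\lambda I, \lambda A)$ with $\lambda=(c+c^{-1})/2$), but the paper then parametrizes all of $S$ as the orbit $\hat W = (W_1 g^{-1}, g W_2)$ with $g\in\GL_n$ and argues, somewhat informally, that ``$g$ and $g^{-1}$ cannot approach $\lambda I$ simultaneously'' as $\lambda\to\infty$. You instead expand the defining constraint $\hat W_1\hat W_2 = A$ about the midpoint, obtaining $\lambda E_1 A + \lambda E_2 + E_1E_2 = (1-\lambda^2)A$, and extract an explicit quantitative lower bound on $r=\|(E_1,E_2)\|_2$ that grows with $\lambda$. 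Your version has two advantages: it does not require identifying $S$ with a $\GL_n$-orbit (so it relies only on the algebraic constraint, not on the symmetry-group viewpoint), and it turns the paper's asymptotic ``the minimum is not bounded'' into a concrete inequality $r\gtrsim\min\{(\lambda-1)\|A\|/(2(1+\|A\|)),\ \sqrt{(\lambda-1)\|A\|/2}\}$ from which an explicit choice of $c$ for a given $b$ can be read off. You also explicitly justify that the $\min$ over $S$ is attained (closedness of $S$ plus coercivity), a small point the paper leaves implicit. The paper's route is a touch shorter and makes the connection to the orbit parametrization more visible, which is thematically what the surrounding section emphasizes; your route is more self-contained and sharper.
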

The structure in the form of $W_1W_2$ is not uncommon in deep learning architectures. Notably, the parameter matrices for queries and keys in the attention function are multiplied directly in this manner \citep{vaswani2017attention}, thus admitting the $GL_h$ symmetry and having orbits with properties given by Proposition \ref{prop:lcc-failure-W1W2}.

While the error barrier in the linear interpolation of two minima can be unbounded (Proposition \ref{prop:lcc-failure-homogeneous}), this typically occurs when the parameters are allowed to be arbitrarily large. 
Constraining the parameters to remain bounded ensures that the loss barrier is bounded above.
The following proposition makes this intuition precise for the set of minima consisting of a particular orbit.
\begin{proposition}
\label{prop:lcc-failure-homogeneous-lower-bound}
    Consider the loss function with the same set of assumptions in Proposition \ref{prop:lcc-failure-homogeneous}.
    Let $W \in L^{-1}(0)$ be a point on the minimum. 
    Consider the multiplicative group of positive real numbers $\R^+$ that acts on $L^{-1}(0)$ by $g \cdot (W_1, ..., W_l) = (W_1, ..., W_{l-2}, gW_{l-1}, W_l g^{-k})$, where $g \in \R^+$.
    Then there exists a positive number $b > 0$, such that for all $0 < \alpha < 1$ and $W' \in Orbit(W)$ with $||W_i'||_2 < c$ for all $i$ and some $c > 0$, the loss value for points on the linear interpolation $L\pa{(1-\alpha) W + \alpha W'} < b$.
\end{proposition}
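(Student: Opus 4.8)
The plan is to exploit how simple the $\R^+$-action is on $Orbit(W)$: it only rescales the last two weight matrices. Every $W' \in Orbit(W)$ has the form $W' = (W_1, \ldots, W_{l-2},\, gW_{l-1},\, g^{-k}W_l)$ for some $g > 0$, so the linear interpolant $\hat W := (1-\alpha)W + \alpha W'$ agrees with $W$ in layers $1,\ldots,l-2$, and in the last two layers it is again a rescaling: $\hat W_{l-1} = a\, W_{l-1}$ and $\hat W_l = b_0\, W_l$, where $a := (1-\alpha) + \alpha g > 0$ and $b_0 := (1-\alpha) + \alpha g^{-k} > 0$, each being a convex combination of two positive numbers. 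First I would record this normal form.

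Next I would substitute it into $L$. Since the first $l-2$ weights are unchanged, $f(\hat W_{l-2}, \ldots, \hat W_1, X) = f(W_{l-2}, \ldots, W_1, X) =: F$. Using positive homogeneity of $\sigma$ (applicable because $a > 0$), $\sigma(\hat W_{l-1} F) = \sigma(a W_{l-1} F) = a^k \sigma(W_{l-1} F)$, hence $\hat W_l \sigma(\hat W_{l-1} F) = b_0 a^k\, W_l \sigma(W_{l-1} F)$. Since $W \in L^{-1}(0)$ we have $W_l \sigma(W_{l-1} F) = Y$, and therefore
\begin{align*}
L(\hat W) = \big\| Y - b_0 a^k\, Y \big\|_2^2 = \big(1 - b_0 a^k\big)^2 \|Y\|_2^2 .
\end{align*}
The problem is thus reduced to bounding the single scalar $b_0 a^k$ uniformly over all admissible $(\alpha, g)$.

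To finish, I would convert the norm constraints into a constraint on $g$. If $c \le \max_{i \le l-2}\|W_i\|_2$ no admissible $W'$ exists and the claim is vacuous; otherwise the hypothesis $\|W_i'\|_2 < c$ forces $g\|W_{l-1}\|_2 < c$ and $g^{-k}\|W_l\|_2 < c$, i.e. $g \in \big((\|W_l\|_2/c)^{1/k},\, c/\|W_{l-1}\|_2\big)$, a bounded subinterval of $(0,\infty)$ bounded away from $0$ and depending only on $c$ and $W$ (degenerate cases $\|W_{l-1}\|_2 = 0$ or $\|W_l\|_2 = 0$ collapse the orbit to a point or make a constraint vacuous and are handled directly). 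Since $a$ is a convex combination of $1$ and $g$, and $b_0$ of $1$ and $g^{-k}$, both $a$ and $b_0$—and hence $b_0 a^k$—remain in bounded intervals of $(0,\infty)$ bounded away from $0$; write $b_0 a^k \in [\mu, M]$ with $0 < \mu \le M < \infty$, where $\mu, M$ depend only on $c$ and $W$. Then $(1 - b_0 a^k)^2 \le \max\{(1-\mu)^2,(M-1)^2\}$, so $b := \max\{(1-\mu)^2,(M-1)^2\}\,\|Y\|_2^2 + 1$ works, uniformly in $\alpha$ and $W'$.

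This proof has no genuinely hard step; the two places that need care are the normal form in the first paragraph—one must verify that interpolating within a positive-rescaling orbit yields only positive rescalings of the last two layers, so that homogeneity of $\sigma$ applies with no sign ambiguity—and the bookkeeping in the last paragraph, where the bound on $\|W_{l-1}'\|_2$ controls $g$ from above while the bound on $\|W_l'\|_2$ controls it from below, jointly confining $g$ to a compact subinterval of $(0,\infty)$. Once $g$ is so confined, continuity of $b_0 a^k$ gives the uniform bound immediately.
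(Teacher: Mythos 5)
Your proof is correct and takes essentially the same approach as the paper's: write the interpolant's last two layers as scalar rescalings of $W_{l-1}$ and $W_l$, use homogeneity of $\sigma$ to reduce the loss to $(1-b_0a^k)^2\|Y\|_2^2$, and observe that the norm bound on $W'$ confines the group element to a compact subinterval of $(0,\infty)$, making the scalar coefficient bounded uniformly in $\alpha$. Your version spells out the bookkeeping (explicit interval for $g$, degenerate and vacuous cases) more carefully than the paper, but the argument is the same.
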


Proposition \ref{prop:lcc-failure-W1W2} and \ref{prop:lcc-failure-homogeneous-lower-bound} are two examples where the knowledge of parameter symmetry enables analysis of the linear connectivity of subsets of minima. As more continuous symmetries are characterized (e.g. the nonlinear symmetries in \citet{zhao2023symmetries}), these analysis can potentially be extended to even larger parts of the set of minima. 

\section{Curves on Minimum from Group Actions}
\label{sec:curves}


The paths connecting two points in the set of minima may not be linear. 
Previously, these paths were discovered empirically by finding parametric curves on which the expected loss is minimized  \citep{garipov2018loss}. 
Using parameter space symmetry, we uncover an alternative and principled way to find curves on the minimum. 

\subsection{Symmetry induced curves}

Suppose the loss function $L: \Par \to \R$ is invariant with respect to some Lie group $G$. Consider the following curve for a point $\vw \in \Par$ and $M \in \text{Lie}(G)$:
\begin{align}
    \gamma_M: \R \times \Par &\to \Par, \cr
    \gamma_M(t, \vw) &= \exp{(tM)} \cdot \vw.
\end{align} 
Since $\exp{(tM)} \in G$ and the action of $G$ preserves the value of $L$, every point on $\gamma_M$ is in the same $L$ level set as $\vw$.
This provides a way to find a curve of constant loss between two points that are in the same orbit. Concretely, given two points $\vw_1$ and $\vw_2 = g \cdot \vw_1$, let $\gamma$ be the following curve:
\begin{align}
\label{eq:gamma-general}
    \gamma: [0, 1] \times G \times \Par &\to \Par, \cr
    \gamma(t, g, \vw) &= \exp{(t \log(g))} \cdot \vw.
\end{align} 
Note that $\gamma(0, g, \vw_1) = \vw_1$, $\gamma(1, g, \vw_1) = \vw_2$, and $L(\gamma(t, g, \vw_1)) = L(\vw_1) = L(\vw_2)$ for all $t \in [0, 1]$. Hence, $\gamma$ is a curve that connects the points $\vw_1$ and $\vw_2$, and every point on $\gamma$ has the same loss value as $L(\vw_1) = L(\vw_2)$.

For a group $G$, the curve $\gamma$ is defined when the map $\cdot: G \times \Par \to \Par$ is continuous and $\text{id} \cdot \vw = \vw$ for all $\vw \in \Par$, even if it is not a group action or does not preserve loss. 
However, when $\cdot$ does not preserve loss, the loss can change on $\gamma$. Consider our two-layer network and the following map: 
\begin{align}
\label{eq:nonlinear-map}
    \cdot: GL(h, \R) \times \Par &\to \Par \cr
    g \cdot (U, V) &= (U \sigma(VX) \sigma(gVX)^{\dagger}, gV).
\end{align}
When $\sigma$ is the identity function, $\cdot$ preserves the loss value, and $\gamma$ defines a curve on the minimum. In general, the map $\eqref{eq:nonlinear-map}$ does not preserve loss when batch size $k$ is larger than hidden dimension $h$. However, the maximum change of loss on $\gamma$ can be bounded as follows. 
\begin{proposition}
\label{prop:barrier-bound}
    Let $(U, V) \in \Par$, and $(U', V') = g \cdot (U, V)$. Then 
    \begin{align}
        \|U\sigma(VX) - U'\sigma(V'X)\| 
        \leq \|U\sigma(VX)\|.
    \end{align}
\end{proposition}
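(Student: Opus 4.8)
The plan is to write the transformed output $U'\sigma(V'X)$ in closed form, recognize $U\sigma(VX) - U'\sigma(V'X)$ as the image of $U\sigma(VX)$ under an orthogonal projection, and then invoke the fact that orthogonal projections are norm-nonincreasing.

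First I would substitute the definition of the map \eqref{eq:nonlinear-map}. Since $(U', V') = g \cdot (U, V) = \pa{U\sigma(VX)\,\sigma(gVX)^{\dagger},\ gV}$, we have $V'X = gVX$ and therefore
\begin{align*}
    U'\sigma(V'X) = U\sigma(VX)\,\sigma(gVX)^{\dagger}\sigma(gVX).
\end{align*}
Writing $A = U\sigma(VX)$ and $P = \sigma(gVX)^{\dagger}\sigma(gVX)$, this gives $U\sigma(VX) - U'\sigma(V'X) = A - AP = A(I - P)$. The second step uses the standard property of the Moore--Penrose pseudoinverse that $B^{\dagger}B$ is the orthogonal projection onto the row space of $B$; hence $P = P^{\top} = P^{2}$, and $I - P$ is likewise an orthogonal projection, so $\|I-P\|_{\mathrm{op}} \le 1$. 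Then
\begin{align*}
    \|U\sigma(VX) - U'\sigma(V'X)\| = \|A(I-P)\| \le \|A\|\,\|I-P\|_{\mathrm{op}} \le \|A\| = \|U\sigma(VX)\|,
\end{align*}
which is the claim. Here I use $\|MN\| \le \|M\|\,\|N\|_{\mathrm{op}}$, which holds both when $\|\cdot\|$ is the spectral norm and when it is the Frobenius norm; for the Frobenius case one can instead argue directly via $\|A(I-P)\|_{\mathrm{F}}^{2} = \Tr\!\big(A^{\top}A\,(I-P)\big) \le \Tr(A^{\top}A) = \|A\|_{\mathrm{F}}^{2}$, using $(I-P)^{2} = I - P \preceq I$ and $A^{\top}A \succeq 0$.

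There is no real obstacle here beyond bookkeeping; the only points needing care are (i) identifying $\sigma(gVX)^{\dagger}\sigma(gVX)$ — rather than $\sigma(gVX)\sigma(gVX)^{\dagger}$ — as the relevant orthogonal projection, and (ii) checking that the matrix shapes line up so that $P$ multiplies $A$ on the correct side. It is worth noting that no hypothesis on $\sigma$, on the rank of $\sigma(gVX)$, or on the invertibility of $g$ enters the argument: the bound holds for every $g$ for which \eqref{eq:nonlinear-map} is defined, which is precisely why the loss change along $\gamma$ can be controlled uniformly.
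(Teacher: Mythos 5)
Your proof takes the same route as the paper's: identify $P = \sigma(gVX)^{\dagger}\sigma(gVX)$ so that $U\sigma(VX) - U'\sigma(V'X) = U\sigma(VX)\,(I-P)$, and then argue the multiplication by $I-P$ cannot increase the norm. The difference is a matter of rigor, and yours is actually the more complete argument. The paper's proof verifies only that $I-P$ is idempotent, i.e.\ $(I-P)^2 = I-P$, and then immediately concludes $\|U\sigma(VX)(I-P)\| \le \|U\sigma(VX)\|$. But idempotency alone does not give that bound: an oblique projector can have operator norm greater than $1$, in which case right-multiplication by it can increase either the spectral or the Frobenius norm. What actually makes the step valid is the Moore--Penrose identity $(B^{\dagger}B)^{\top} = B^{\dagger}B$, so that $P$ is a symmetric idempotent, i.e.\ an orthogonal projector, and hence $\|I-P\|_{\mathrm{op}} \le 1$. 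You invoke this explicitly (and also give the clean trace argument $\Tr(A^{\top}A(I-P)) \le \Tr(A^{\top}A)$ using $0 \preceq I-P \preceq I$ for the Frobenius case), whereas the paper leaves it implicit. Your remark that no rank or invertibility assumptions on $\sigma(gVX)$ or $g$ are needed is also accurate and worth keeping.
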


We demonstrate Proposition \ref{prop:barrier-bound} empirically using a set of two-layer networks with various parameter space dimensions. 
Specifically, we construct networks in the form of $\|U \sigma( V X)- Y\|^2$, with $\sigma$ being the sigmoid function, $X \in \R^{n \times k}, Y \in \R^{m \times k}$, and $(U, V) \in \Par = \R^{m \times h} \times \R^{h \times n}$. 
We create 100 such networks, each with $m, h, n, k$ randomly sampled from integers between 2 and 100. In each network, elements in $X$ and $Y$ are sampled independently from a normal distribution, and $U, V$ are randomly initialized. 
After training with SGD, we compute $(U', V') = g \cdot (U, V)$ using \eqref{eq:nonlinear-map} with a random invertible matrix $g$.
We then plot $\|U\sigma(VX)\|$ against $\|U\sigma(VX) - U'\sigma(V'X)\|$ in Figure \ref{fig:barrier-on-symmetry-curve}(a). 
All points are above the line $y=x$, as predicted by Proposition \ref{prop:barrier-bound}.

While the map \eqref{eq:nonlinear-map} is not a group action in general, it connects more points in the set of minima than only using known symmetries, and the points on the connecting curves have bounded loss. 
Figure \ref{fig:barrier-on-symmetry-curve}(b-c) shows that the loss on the curves induced by approximate symmetries remains relatively low, compared to the loss on the linear interpolation between the two ends of these curves. 
We consider a two layer network with loss function $\|W_2 \sigma( W_1 X) - Y\|$, with $\sigma$ being a leaky ReLU function, $X \in \R^{16 \times 8}, Y \in \R^{64 \times 8}$, and $(W_1, W_2) \in \Par = \R^{32 \times 16} \times \R^{32}$. 
In the figures, $\gamma$ denotes a curve obtained using Equation \eqref{eq:gamma-general} together with \eqref{eq:nonlinear-map}. 
The starting point of $\gamma$ is a minimum found by SGD.
Both $\gamma$ and the linear interpolation are parametrized by $t \in [0, 1]$. Compared to the linear interpolation between the two end points of $\gamma$, the loss on $\gamma$ is consistently lower. 
Figure \ref{fig:barrier-on-symmetry-curve}(c) uses group elements with larger magnitudes, resulting in a larger distance between $\gamma(0)$ and $\gamma(1)$, which might explain the higher loss barrier on their linear interpolation.



\begin{figure}[t]
\begin{center}
\ \ \ (a) \hfill (b) \hfill (c) \hfill ~ \\
\includegraphics[width=0.32\columnwidth]{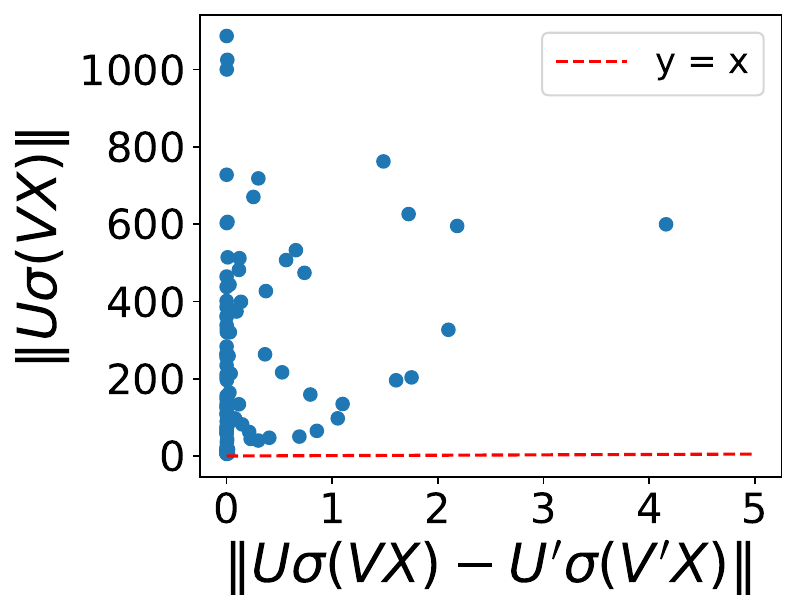}
\includegraphics[width=0.32\columnwidth]{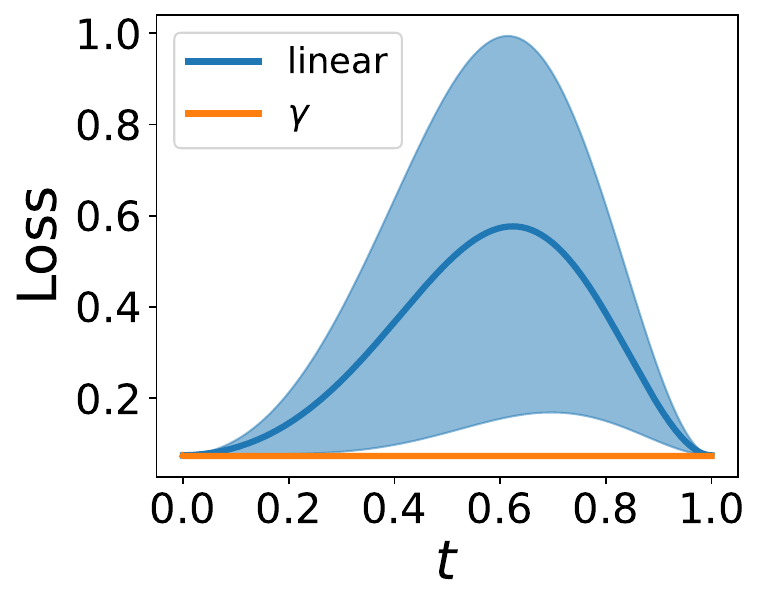}
\includegraphics[width=0.32\columnwidth]{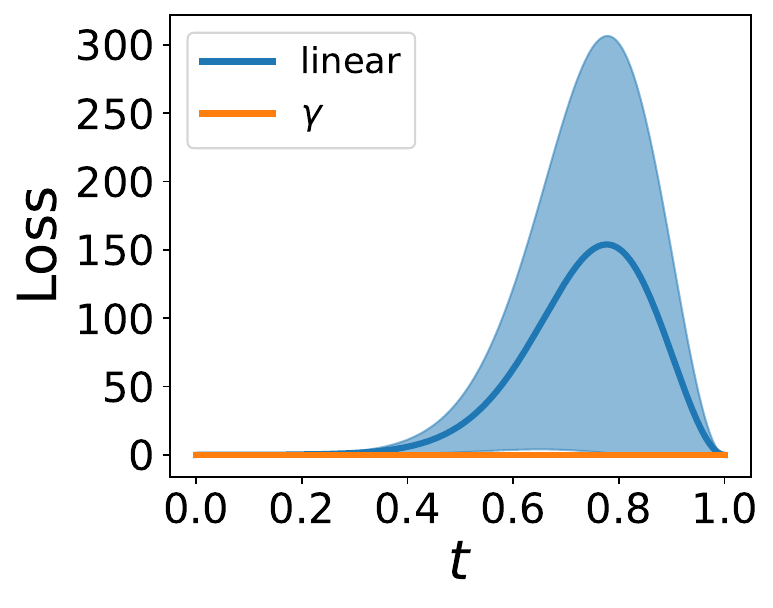}
\caption{(a) Empirical validation of Proposition \ref{prop:barrier-bound}. (b-c) The loss on the curves induced by approximate symmetries ($\gamma$) remains relatively low, compared to the loss on the linear interpolation between the two ends of these curves. 
(b) and (c) differ by the magnitude of the group element used.
The loss is averaged over 5 random curves.
}
\label{fig:barrier-on-symmetry-curve}
\end{center}
\end{figure}

\subsection{Approximate linear connectivity under bounded curvature of minima}
\label{sec:approximate-linear-connectivity-curvature}

Knowing the explicit expression of connecting curves brings new insight into when linear mode connectivity approximately holds.
In particular, these expressions provide information about the curvature of the curves.
If the curvatures are small, then there exists an approximately straight line connecting any two minima along which the loss remains close to its minimum value. 

Consider a loss level set $L^{-1}(c) = \{\vw \in \Par : L(\vw) = c\}$ with some $c \in \R$. 
Suppose we have two points $\vw_1, \vw_2 \in L^{-1}(c)$ connected by a smooth curve $\gamma$ lying entirely within $L^{-1}(c)$. 
The curvature of $\gamma$ can be written as $\kappa(\gamma, t) = \frac{\|T'(t)\|}{\|\gamma'(t)\|}$, where $\gamma'=\frac{d\gamma}{dt}$ and $T(t) = \frac{\gamma'(t)}{\|\gamma'(t)\|}$.
If the curvature of this curve is small or bounded, we can show that there exists an approximately straight line connecting $\vw_1$ and $\vw_2$ that remains close to $L^{-1}(c)$.
Additionally, if $L$ is Lipschitz continuous, its value remains close to $c$ along this line segment. 
We formalize this with the following theorem.

\begin{theorem}
\label{thm:small-curvature-approx-straight-line}
Let $L^{-1}(c) \subset \Par$, with $c \in \R$, be a level set of the loss function $L: \Par \to \mathbb{R}$. 
Let $\gamma: [0,1] \to L^{-1}(c)$ be a smooth curve in $L^{-1}(c)$ connecting two points $\vw_1 = \gamma(0)$ and $\vw_2 = \gamma(1)$.
Suppose the curvature $\kappa(t)$ of $\gamma$ satisfies $\kappa(t) \leq \kappa_{\max}$ for all $t \in [0,1]$. 

Let $S$ be the straight line segment connecting $\vw_1$ and $\vw_2$. Then, for any point \( \vw \) on $S$, the distance to $L^{-1}(c)$ is bounded by
\begin{align}
    \operatorname{dist}(\vw, L^{-1}(c)) \leq d_{\max},
\end{align}
with
$$d_{\max} = \frac{1}{\kappa_{\max}} \left(1 - \sqrt{1 - \left(\frac{\kappa_{\max} \|\vw_2 - \vw_1\|_2 }{2} \right)^2} \right).$$
Furthermore, assuming $L$ is Lipschitz continuous with Lipschitz constant $C_L$, the loss at any point $\vw$ on $S$ satisfies
\begin{align}
    |L(\vw) - c| \leq C_L d_{\max}.
\end{align}
\end{theorem}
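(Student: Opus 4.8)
The plan is to reduce the bound to a purely geometric statement about curves of bounded curvature in a Euclidean space and then transfer the consequence to the loss via Lipschitz continuity. The key geometric fact is that a smooth curve whose curvature never exceeds $\kappa_{\max}$ cannot deviate too far from the chord joining its endpoints: at every point it bends at most as sharply as an arc of a circle of radius $R = 1/\kappa_{\max}$. So the extreme case is exactly a circular arc of radius $R$ subtending the chord $\vw_1\vw_2$ of length $\ell := \|\vw_2-\vw_1\|_2$, whose maximal sagitta (distance from the chord to the arc) is $R - \sqrt{R^2 - (\ell/2)^2} = \frac{1}{\kappa_{\max}}\bigl(1 - \sqrt{1 - (\kappa_{\max}\ell/2)^2}\bigr) = d_{\max}$. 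Since $\gamma$ lies in $L^{-1}(c)$, every point of $\gamma$ is within $d_{\max}$ of the chord $S$; conversely, every point $\vw$ of $S$ is within $d_{\max}$ of some point of $\gamma$, hence within $d_{\max}$ of $L^{-1}(c)$, giving $\operatorname{dist}(\vw, L^{-1}(c)) \le d_{\max}$. The Lipschitz part is then immediate: pick $\vw^\ast \in L^{-1}(c)$ with $\|\vw - \vw^\ast\| \le d_{\max}$ (or a point arbitrarily close to the infimum), so $|L(\vw) - c| = |L(\vw) - L(\vw^\ast)| \le C_L\|\vw - \vw^\ast\| \le C_L d_{\max}$.

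\textbf{Carrying out the geometric core.} I would make the comparison-with-a-circle step rigorous as follows. Parametrize $\gamma$ by arc length $s \in [0, \Lambda]$, so $\|\gamma'(s)\| = 1$ and $\|\gamma''(s)\| = \kappa(s) \le \kappa_{\max}$. Fix the chord $S$; let $u$ be the unit vector along $\vw_2-\vw_1$ and let $P$ denote orthogonal projection onto the hyperplane orthogonal to $u$, so that the "height" of $\gamma$ above the chord line is $h(s) = \|P(\gamma(s) - \vw_1)\|$. The standard argument bounds $\Lambda$ and the maximal height simultaneously: writing $\theta(s)$ for the angle between $\gamma'(s)$ and $u$, one has $|\theta'(s)| \le \|\gamma''(s)\| \le \kappa_{\max}$, and since $\gamma'$ starts and ends making some angle with $u$ whose signed total turning must bring the curve back to the chord, the worst case is that $\theta$ increases at the maximal rate $\kappa_{\max}$ throughout — i.e. $\gamma$ is a circular arc of radius $R = 1/\kappa_{\max}$. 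For that arc the height is exactly the circular sagitta computed above. A clean way to avoid fussing with the global turning is to invoke Schur's comparison theorem (or a direct convexity/Gr\"onwall argument on $h(s)$): among all unit-speed curves with the same endpoints and curvature bounded by $\kappa_{\max}$, the circular arc maximizes the distance from the chord. Either route yields $\max_{s} \operatorname{dist}(\gamma(s), S) \le d_{\max}$, and by symmetry of the circular-arc configuration, $\max_{\vw \in S} \operatorname{dist}(\vw, \gamma) \le d_{\max}$ as well.

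\textbf{Main obstacle.} The one place that needs care is making the "a curve of bounded curvature stays within the circular sagitta of its chord" claim airtight, including the implicit hypothesis that the configuration is non-degenerate, namely that $\kappa_{\max}\ell/2 \le 1$ so that the square root is real. This is automatically satisfied when $\gamma$ actually has bounded curvature and connects $\vw_1$ to $\vw_2$ — a curve of curvature $\le \kappa_{\max}$ with endpoints at distance $\ell$ cannot exist if $\ell > 2R$ is too large relative to its length, and in the regime where it does exist the bound $\kappa_{\max}\ell/2 \le 1$ holds — but I would state this explicitly (or simply assume $\kappa_{\max}\|\vw_2-\vw_1\|_2 \le 2$ as a hypothesis) to keep $d_{\max}$ well-defined. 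I would also note the harmless limiting case $\kappa_{\max} \to 0$, where $d_{\max} \to 0$ and the statement says $S$ itself lies in $L^{-1}(c)$, consistent with a straight $\gamma$. Everything after the geometric lemma — the projection-onto-$L^{-1}(c)$ step and the Lipschitz estimate — is routine.
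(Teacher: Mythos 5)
Your approach is the same as the paper's: compare $\gamma$ to a circular arc of radius $R = 1/\kappa_{\max}$ and read off the sagitta $d_{\max}$ via the Pythagorean relation. You add two points of care the paper omits: the implicit constraint $\kappa_{\max}\|\vw_2-\vw_1\|_2 \le 2$ needed for the square root in $d_{\max}$ to be real, and the distinction between ``every point of $\gamma$ is within $d_{\max}$ of $S$'' and the conclusion the theorem actually needs, ``every point of $S$ is within $d_{\max}$ of $\gamma$.'' Both are genuine improvements in precision over the paper's two-sentence comparison.

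However, the step you isolate as the geometric core and propose to close with Schur's comparison theorem does not hold as stated, and this reflects a latent gap in the paper's own argument as well. The assertion that among all unit-speed curves with the same endpoints and curvature bounded by $\kappa_{\max}$, the circular arc maximizes the distance from the chord, fails without a constraint on total turning (equivalently, on arc length). A major circular arc of radius $R$ subtending angle $2\pi - \varepsilon$ has curvature exactly $\kappa_{\max} = 1/R$ everywhere, chord length $\ell = 2R\sin(\varepsilon/2) \to 0$, hence $d_{\max} \approx \ell^2/(8R) \to 0$; yet the chord midpoint is at distance $\ell/2 \gg d_{\max}$ from the nearest point of that arc (the endpoints), and the far side of the arc is at distance nearly $2R$ from the chord. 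So both directions of the estimate are violated. Schur's theorem compares chord lengths of curves of equal arc length and does not, by itself, give a bound on chord-to-curve distance, so invoking it does not repair the argument. To make the sagitta bound valid one needs an additional hypothesis---for instance that $\gamma$'s total turning is at most $\pi$, or an explicit upper bound on $\gamma$'s arc length, or a restriction to minor arcs in the comparison---and neither your write-up nor the paper's proof articulates such a hypothesis.
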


When the group action induces curves with bounded curvature, Theorem~\ref{thm:small-curvature-approx-straight-line} applies. 
Since the minimum is also a level set of $L$, Theorem \ref{thm:small-curvature-approx-straight-line} provides a sufficient condition for linear mode connectivity to approximately hold. 
When the curvature of the minimum is small, points on the minimum are approximately connected through nearly straight paths along with the loss does not increase significantly.
If $\kappa_{\max}\|\vw_2 - \vw_1\|$ is small, we can use the first-order approximation of the square root and obtain $d_{\max} \approx \frac{\kappa_{\max} \|\vw_2 - \vw_1\|_2^2}{8}$.






\section{Discussion}
In this work, we study topological properties of the loss level sets by relating their topology to the topology of symmetry groups. 
Specifically, we derive the number of connected components of full-rank multi-layer networks with and without skip connections, and prove mode connectivity up to permutation for full-rank linear regressions. 
Using symmetry in the parameter space, we construct an explicit expression for curves that connect two points in the same orbit. 
The explicit expressions allow us to obtain the curvature of these curves, which are useful to bound the loss barrier on linear interpolation between minima.

For practitioners, these results motivate concrete strategies---and cautions---for tasks that navigate the loss landscape, including model merging, ensembling, and fine-tuning:
\begin{itemize}
    \item One can build low-loss curves explicitly using known parameter symmetries. This gives a principled and efficient way to obtain new minima from old ones, potentially useful for (1) generating model ensembles with low cost; (2) improving model alignment by allowing a much larger group of transformations than permutation; and (3) mitigating catastrophic forgetting in fine-tuning by constraining updates to the symmetry-induced manifold of the pretraining minimum.
    \item The connectedness of minima supports the practice of model merging and ensembling, even when models are trained separately. In addition to permutation, many other symmetry transformations can connect solutions that would otherwise appear very different.
    \item Linear interpolation between minima is not guaranteed to lead to better models, despite its widespread use. This highlights the need to evaluate whether the minima found by specific learning algorithms are approximately connected before averaging models directly.
\end{itemize}

Extending these results to nonlinear networks is a challenging yet exciting future direction.
A full characterization of the minima in non-linear settings requires identifying the complete set of symmetries---an open problem for many architectures---and analyzing how the resulting orbits intersect, which becomes increasingly complex as the number of orbits grows.
One approach is through approximate symmetries, such as those in Section \ref{sec:curves}. 
Another is by continuously deforming the network or its minimum and studying its behavior in the limit as the network approaches a linear regime. 
Additionally, many modern architectures retain large continuous symmetry groups, particularly in components like self attention or normalization layers. As we saw in Section \ref{sec:lmc-orbits}, even partial knowledge of symmetries in a network can yield valuable structural information about its minima.




\section*{Acknowledgements}

We thank Iordan Ganev for helpful comments on proofs in Appendix \ref{appendix:background}. 
This work was supported in part by the U.S. Army Research Office under Army-ECASE award W911NF-07-R-0003-03, the U.S. Department Of Energy, Office of Science, IARPA HAYSTAC Program, and NSF Grants \#2205093, \#2146343, \#2134274, \#2107256, \#2134178, CDC-RFA-FT-23-0069, DARPA AIE FoundSci and DARPA YFA. 


\section*{Impact Statement}

This work advances the theoretical understanding of mode connectivity in neural network loss landscapes through parameter space symmetries, with potential applications in model merging and fine-tuning. We do not identify specific ethical concerns but recognize that implications may vary depending on the domain of application.


\bibliography{icml2025}
\bibliographystyle{icml2025}

\newpage
\appendix
\onecolumn

\section*{Appendix}
\section{Background}
\label{appendix:background}

This section contains additional background in general topology and proofs for statements in Section \ref{sec:background}. 
We refer readers to \cite{lee2010introduction} for a more detailed introduction to this topic.

\subsection{Connected components}
Consider two topological spaces $X$ and $Y$. A map $f: X \to Y$ is \textit{continuous} if for every open subset $U \subseteq Y$, its preimage $f^{-1}(U)$ is open in $X$. If $X$ and $Y$ are metric spaces with metrics $d_X$ and $d_Y$ respectively, this is equivalent to the delta-epsilon definition. That is, $f$ is continuous if at every $x \in X$, for any $\epsilon > 0$ there exists $\delta > 0$ such that $d_X(x, y) < \delta$ implies $d_Y(f(x), f(y)) < \epsilon$ for all $y \in X$.

A topological space is \textit{connected} if it cannot be expressed as the union of two disjoint, nonempty, open subsets. A topological space $X$ is \textit{path connected} if for every $p, q \in X$, there is a continuous map $f: [0, 1] \to X$ such that $f(0) = p$ and $f(1) = q$. Path connectedness implies connectedness, but the converse is not true  \citep{lee2010introduction}. \cite{nguyen2019connected} studies the path connectedness of sublevel sets of loss functions.

The following theorem is the main intuition of this paper and will appear frequently in proofs.
\begin{theorem}[Theorem 4.7 in \cite{lee2010introduction}, Theorem \ref{theorem:main-theorem-on-connectedness-main} in the Main Text]
\label{theorem:main-theorem-on-connectedness-appendix}
    Let $X, Y$ be topological spaces and let $f: X \to Y$ be a continuous map. If $X$ is connected, then $f(X)$ is connected. 
\end{theorem}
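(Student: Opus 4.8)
The plan is to argue by contradiction directly from the definition of connectedness, using only the fact that preimages of open sets under a continuous map are open. First I would replace $f$ by its corestriction $\bar f : X \to f(X)$, where $f(X)$ carries the subspace topology inherited from $Y$; this map is continuous and surjective, so it suffices to prove that a continuous \emph{surjection} out of a connected space has connected codomain.

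Next, suppose for contradiction that $f(X)$ is disconnected. By definition there exist nonempty, disjoint, open subsets $A, B \subseteq f(X)$ with $A \cup B = f(X)$. Writing $A = U \cap f(X)$ and $B = V \cap f(X)$ for some open $U, V \subseteq Y$, continuity of $f$ gives that $\bar f^{-1}(A) = f^{-1}(U)$ and $\bar f^{-1}(B) = f^{-1}(V)$ are open in $X$. I would then verify the three properties that contradict connectedness of $X$: (i) $\bar f^{-1}(A) \cap \bar f^{-1}(B) = \bar f^{-1}(A \cap B) = \emptyset$; (ii) $\bar f^{-1}(A) \cup \bar f^{-1}(B) = \bar f^{-1}(A \cup B) = \bar f^{-1}(f(X)) = X$; and (iii) both sets are nonempty, since $\bar f$ is surjective and $A, B$ are nonempty. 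This exhibits $X$ as a union of two disjoint nonempty open sets, contradicting its connectedness; hence $f(X)$ must be connected, which is the claim.

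There is essentially no obstacle here — the argument is a one-line unwinding of definitions. The only point requiring care is bookkeeping with the subspace topology: ``open in $f(X)$'' means ``intersection with $f(X)$ of an open subset of $Y$,'' and one must apply continuity of $f$ to the ambient open set $U \subseteq Y$ (not to $A$ itself, which need not be open in $Y$) in order to conclude that $f^{-1}(U)$ is open in $X$. An alternative, equally short route I could take instead uses the characterization that $X$ is connected if and only if every continuous map $X \to \{0,1\}$ into the discrete two-point space is constant: a disconnection of $f(X)$ furnishes a continuous surjection $f(X) \to \{0,1\}$, and composing with $f$ yields a nonconstant continuous map $X \to \{0,1\}$, again a contradiction.
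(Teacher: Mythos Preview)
Your argument is correct and is the standard textbook proof. Note, however, that the paper does not supply its own proof of this statement: it is simply cited as Theorem~4.7 of Lee's \emph{Introduction to Topological Manifolds} and used as a black box throughout. So there is no ``paper's proof'' to compare against --- your write-up is a faithful rendition of the classical argument (and your alternative via continuous maps to the discrete space $\{0,1\}$ is equally standard).
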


A map $f$ is a \textit{homeomorphism} from $X$ to $Y$ if $f$ is bijective and both $f$ and $f^{-1}$ are continuous. $X$ and $Y$ are \textit{homeomorphic} if such a map exists. A \textit{(connected) component} of a topological space $X$ is a maximal nonempty connected subset of $X$. The components of $X$ form a partition of $X$.
The next two corollaries of Theorem \ref{theorem:main-theorem-on-connectedness-appendix} show that connectedness and the number of connected components are topological properties. That is, they are preserved under homeomorphisms.
\begin{corollary}
\label{corollary:homeomorphism-invariance-of-connectedness}
    Let $f: X \to Y$ be a homeomorphism from $X$ to $Y$, and let $U \subseteq X$ be a subset of $X$ with the subspace topology. Then $U$ is connected if and only if $f(U) \subseteq Y$ is connected.
\end{corollary}
\begin{proof}
    By the definition of homeomorphism, $f$ and $f^{-1}$ are continuous. From Theorem \ref{theorem:main-theorem-on-connectedness-appendix}, if $U \in X$ is connected, then $f(U) \in Y$ is connected. Similarly, if $f(U)$ is connected, then $f^{-1}(f(U)) = U$ is connected.
\end{proof}

\begin{corollary}
\label{corollary:topological-invariance-of-connectedness}
    Let $X$ be a topological space that has $N$ components. Let $Y$ be a topological space homeomorphic to $X$. Then $Y$ has $N$ components.
\end{corollary}
\begin{proof}
    Let $C_1, ..., C_N$ be the components of $X$. Let $f$ be a homeomorphism from $X$ to $Y$. Since $f$ is bijective and $C_1, ..., C_N$ is a partition of $X$, $f(C_1), ..., f(C_N)$ is a partition of $Y$. From Theorem \ref{theorem:main-theorem-on-connectedness-appendix}, since $C_1, ..., C_N$ are all connected, so are $f(C_1), ..., f(C_N)$. 
    
    Lastly, we need to show that $f(C_1), ..., f(C_N)$ are maximally connected. Suppose there exists a set $U \subseteq Y$, such that $U \not\subseteq f(C_i)$ and $f(C_i) \cup U$ is connected for some $i$. Then by Theorem \ref{theorem:main-theorem-on-connectedness-appendix}, $f^{-1}(f(C_i) \cup U) \supset C_i$ is connected in $X$. This contradicts the fact that $C_i$ is a maximal component in $X$. Therefore, $f(C_1), ..., f(C_N)$ are maximally connected.
    
    Since $f(C_1), ..., f(C_N)$ partitions $Y$ and are maximally connected, $Y$ has $N$ components.
\end{proof}

Another consequence of Theorem \ref{theorem:main-theorem-on-connectedness-appendix} is the following upper bound on the number of components of the image of a continuous map.
\begin{proposition}
\label{prop:connected-components-continuous-map}
    Let $f: X \to Y$ be a continuous map. The number of components of the image $f(X) \subseteq Y$ is at most the number of components of $X$.
\end{proposition}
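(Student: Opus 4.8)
The plan is to exploit the decomposition of $X$ into its connected components together with Theorem~\ref{theorem:main-theorem-on-connectedness-appendix}. Write $\{X_i\}_{i \in I}$ for the set of connected components of $X$, so that $X = \bigcup_{i \in I} X_i$ is a partition into maximal connected subsets and $|I| = |\pi_0(X)|$. Since each $X_i$ is connected and $f$ is continuous, Theorem~\ref{theorem:main-theorem-on-connectedness-appendix} shows that $f(X_i)$ is a connected subset of $Y$, hence of the subspace $f(X)$. Because $f(X) = \bigcup_{i \in I} f(X_i)$, the family $\{f(X_i)\}_{i \in I}$ is a cover of $f(X)$ by nonempty connected sets.

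Next I would invoke the standard fact that every connected subset of a space lies in exactly one connected component of that space (the component through a point being the union of all connected subsets containing it). This yields a well-defined map $\Phi : I \to \pi_0(f(X))$ sending $i$ to the unique component of $f(X)$ that contains $f(X_i)$. The key claim is that $\Phi$ is surjective: given any component $C \in \pi_0(f(X))$, pick $y \in C$; then $y = f(x)$ for some $x \in X$, and $x$ lies in some $X_i$, so $y \in f(X_i) \subseteq \Phi(i)$, which forces $\Phi(i) = C$. A surjection from $I$ onto $\pi_0(f(X))$ gives $|\pi_0(f(X))| \le |I| = |\pi_0(X)|$, which is exactly the assertion.

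I do not anticipate a genuine obstacle here; the argument is essentially bookkeeping layered on top of Theorem~\ref{theorem:main-theorem-on-connectedness-appendix}. The only two points needing a little care are (i) that a connected subset of $f(X)$ sits inside a single component, which one should either cite or derive from the fact that a union of connected sets with nonempty common intersection is connected, and (ii) the cardinality comparison if one wants the statement for infinitely many components; in the finite regime used elsewhere in the paper, plain counting of the fibers of $\Phi$ suffices. I would phrase the proof for arbitrary cardinalities, since it costs nothing beyond the surjectivity of $\Phi$.
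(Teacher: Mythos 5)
Your proof is correct and takes essentially the same route as the paper: decompose $X$ into its connected components, push each component forward by $f$ to get a connected cover of $f(X)$, and observe that the induced map to $\pi_0(f(X))$ is surjective. You are somewhat more careful than the paper's version (which merely asserts the surjection exists) in spelling out why a connected subset lies in a single component and in noting that the argument works for arbitrary cardinalities, but the substance is identical.
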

\begin{proof}
    Let $C_1, ..., C_N$ be the components of $X$. Since $C_i$ is continuous and the action is continuous, according to Theorem \ref{theorem:main-theorem-on-connectedness-appendix}, $f(C_i)$ is continuous for all $i \in \{1, ..., N\}$. Additionally, since $\bigcup_{i=1}^N C_i = X$, we have $\bigcup_{i=1}^N f(C_i) = f(X)$. Therefore, there is a surjective map from $\{ f(C_1), ..., f(C_N) \}$ to the set of components of $f(X)$, which implies that $f(X)$ has at most $N$ components. 
\end{proof}

Let $X_1, ..., X_n$ be topological spaces. The \textit{product space} is their Cartesian product $X_1 \times ... \times X_n$ endowed with the product topology. Denote $\pi_0(X)$ as the set of connected components of a space $X$. The following proposition provides a way to count the components of a product space.
\begin{proposition}
\label{prop:connected-component-direct-product}
    Consider $n$ topological spaces $X_1, ..., X_n$. Then $|\pi_0(X_1 \times ... \times X_n)| = \prod_{i=0}^n |\pi_0(X_i)|$.
\end{proposition}
\begin{proof}
    When $n=1$, the number of components of the product space is $|\pi_0(X_1)|$. 
    
    For the $n > 1$ case, since $X_1 \times ... \times X_n = (X_1 \times ... \times X_{n-1}) \times X_n$, it suffices to show that $|\pi_0(A \times B)| = |\pi_0(A)| |\pi_0(B)|$ for any topological spaces $A$ and $B$. 
    Let $f: \pi_0(A) \times \pi_0(B) \to \pi_0(A \times B)$ be the map that assigns $C \in \pi_0(A) \times \pi_0(B)$ to the element in $\pi_0(A \times B)$ that contains $C$. Then $f$ is surjective because $\pi_0(A) \times \pi_0(B)$ forms a partition of $A \times B$. 
    To prove that $f$ is injective, suppose that $f(C_1) = f(C_2)$ for $C_1, C_2 \in \pi_0(A) \times \pi_0(B)$. Consider the projection $\pi_A: A \times B \to A$. Since $\pi_A$ is continuous and $C_1, C_2$ belong to the same component of $A \times B$, $\pi_A(C_1)$ and $\pi_A(C_2)$ belong to the same component of $A$. 
    Similarly, $\pi_B(C_1)$ and $\pi_B(C_2)$ belong to the same component of $B$ under the projection $\pi_B: A \times B \to B$. 
    Since all components of $A$ and $B$ are maximally connected, we have $C_1 = C_2$, which implies that $f$ is injective. 
    Since $f$ is a bijection from $\pi_0(A) \times \pi_0(B)$ to $\pi_0(A \times B)$, $|\pi_0(A \times B)| = |\pi_0(A)| |\pi_0(B)|$.
\end{proof}

\subsection{Groups}
A \textit{group} is a set $G$ together with a composition law, written as juxtaposition, that satisfies associativity, $(ab)c=a(bc)$ $\forall\ a,b,c \in G$, has an identity $1$ such that $1a=a1=a$ $\forall\ a \in G$, and for all $a \in G$, there exists an inverse $b$ such that $ab=ba=1$. 
An \textit{action} of a group $G$ on a set $S$ is a map $\cdot: G \times S \to S $ that satisfies $1 \cdot s = s$ for all $s \in S$ and $(g g') \cdot s = g \cdot (g' \cdot s)$ for all $g, g'$ in $G$ and all $s$ in $S$. The \textit{orbit} of $s \in S$ is the set $O(s) = \{s' \in S ~|~ s' = gs \text{ for some } g \in G\}$.

A \textit{topological group} is a group $G$ endowed with a topology such that multiplication and inverse are both continuous. A recurring example is the general linear group $GL_n(\R)$, with the subspace topology obtained from $\R^{n^2}$. The group $GL_n(\R)$ has two connected components, which correspond to matrices with positive and negative determinant.

The \textit{product} of groups $G_1, ..., G_n$ is a group denoted by $G_1 \times ... \times G_n$. The set underlying $G_1 \times ... \times G_n$ is the Cartesian product of $G_1, ..., G_n$. The group structure is defined by identity $(1, ..., 1)$, inverse $(g_1, ..., g_n)^{-1} = (g_1^{-1}, ..., g_n^{-1})$, and multiplication rule $(g_1, ..., g_n)(g_1', ..., g_n') = (g_1g_1', ..., g_ng_n')$.

\subsection{Relating connectedness of groups, orbits, and level sets}
From Theorem \ref{theorem:main-theorem-on-connectedness-main}, continuous maps preserve connectedness. Through continuous actions, we study the connectedness of orbits and level sets by relating them to the connectedness of more familiar objects such as the general linear group. 
Establishing a homeomorphism from the group to the set of minima requires the symmetry group's action to be continuous, transitive, and free. 
Here we only assume the action to be continuous and try to bound the number of components of the orbits. 

As an immediate consequence of Proposition \ref{prop:connected-components-continuous-map}, an orbit cannot have more components than the group.

\begin{corollary}
\label{corollary:connected-components-orbit-group}
    Assume that the action of a group $G$ on $S$ is continuous. Then the number of connected components of orbit $O(s)$ is smaller than or equal to the number of connected components of $G$, for all $s$ in $S$.
\end{corollary}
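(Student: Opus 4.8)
The plan is to realize the orbit $O(s)$ as the continuous image of the group $G$ and then invoke Proposition~\ref{prop:connected-components-continuous-map}. First I would fix an arbitrary $s \in S$ and introduce the \emph{orbit map} $\phi_s : G \to S$ defined by $\phi_s(g) = g \cdot s$. By the very definition of the orbit, $\phi_s(G) = O(s)$, so the whole task reduces to establishing that $\phi_s$ is continuous; the bound on the number of components then follows immediately from the proposition applied to $\phi_s$.

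To show continuity of $\phi_s$, I would factor it through the group action. Consider the slice inclusion $\iota_s : G \to G \times S$, $g \mapsto (g, s)$. Its composition with the projection $G \times S \to G$ is the identity, and its composition with the projection $G \times S \to S$ is the constant map with value $s$; both are continuous, so by the universal property of the product topology $\iota_s$ is continuous. Since the action $\cdot : G \times S \to S$ is continuous by hypothesis, the composite $\phi_s = (\cdot) \circ \iota_s$ is continuous as a composition of continuous maps.

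Finally, with $\phi_s$ continuous and $\phi_s(G) = O(s)$, Proposition~\ref{prop:connected-components-continuous-map} yields that the number of connected components of $O(s)$ is at most the number of connected components of $G$. Because $s$ was chosen arbitrarily, this holds for every $s \in S$, which is exactly the claim.

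I do not expect any serious obstacle here: the only point that needs a moment of care is the continuity of the slice inclusion $\iota_s$ (i.e.\ that continuity of the joint action map $G \times S \to S$ transfers to the partial map $g \mapsto g \cdot s$), and once that is noted the statement is a one-line consequence of the already-proved bound for continuous images. No computation is required.
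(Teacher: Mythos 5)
Your proposal is correct and takes essentially the same route as the paper: realize $O(s)$ as the continuous image of $G$ under the orbit map and invoke Proposition~\ref{prop:connected-components-continuous-map}. The paper compresses this to one sentence and is in fact slightly imprecise (it says the orbit is ``the image of the group action,'' when it is really the image of the restriction $g \mapsto g \cdot s$); your explicit factorization through the slice inclusion $g \mapsto (g,s)$ tidies up exactly that point.
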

\begin{proof}
    An orbit $O(s)$ is the image of the group action, which we assume to be continuous. The result follows from Proposition \ref{prop:connected-components-continuous-map}.
\end{proof}



Let $X$ be a topological space and $L : X \to \R$ a continuous function on $X$. A topological group $G$ is said to be a \textit{symmetry group} of $L$ if $L(g \cdot x) = L(x)$ for all $g \in G$ and $x \in X$.
In this case, the action can be defined on a level set of $L$, $L^{-1}(c)$ with a $c \in \R$, as $G \times L^{-1}(c) \to L^{-1}(c)$.
If the minimum of $L$ consists of a single orbit, Corollary \ref{corollary:connected-components-orbit-group} extends immediately to the number of components of the minimum.
\begin{corollary}
    Let $L$ be a function with a symmetry group $G$. If the minimum of $L$ consists of a single $G$-orbit, then the number of connected components of the minimum is smaller or equal to the number of connected components of $G$.
\end{corollary}

Generally, symmetry groups do not act transitively on a level set $L^{-1}(c) \in X$. 
In this case, the connectedness of the orbits does not directly inform the connectedness of the level set. 
\begin{proposition}~ 

\begin{enumerate}
    \item There exists a space $X$ and a group $G$ with an action on $X$, such that each orbit for the group action is connected and $X$ is not connected. 
    \item There exists a space $X$ and a group $G$ with an action on $X$, such that each orbit for the group action is disconnected and $X$ is connected. 
\end{enumerate}
\end{proposition}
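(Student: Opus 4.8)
The plan is to exhibit two small, concrete examples—one for each clause—using the simplest possible actions, and then verify the stated topological properties directly. For clause (1), I would take $X = \{0, 1\}$ with the discrete topology and $G$ the trivial group acting trivially: each orbit is a single point, hence connected, while $X$ has two components. Equivalently, and closer in spirit to the rest of the paper, one can take $X = \GL_1(\R) = \R \setminus \{0\}$ with $G = \R^+$ acting by multiplication; the orbits are the two rays $(0,\infty)$ and $(-\infty,0)$, each connected, but $X$ itself is disconnected. Either choice settles (1); I would present the $\GL_1(\R)$ version since it mirrors the running example of the paper and reinforces the point that orbit connectedness says nothing about the ambient level set.

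For clause (2), I would take $X = S^1$, the unit circle in $\R^2$, which is connected (indeed path connected, being a continuous image of $[0,1]$). Let $G = \Z/2\Z$ act on $S^1$ by the antipodal map $x \mapsto -x$, or equivalently act by reflection across a diameter. Every orbit then consists of exactly two distinct points, so each orbit is a two-point discrete space and hence disconnected, while $X = S^1$ is connected. Verifying that the action is continuous is immediate since $x \mapsto -x$ is a homeomorphism of $S^1$; verifying that $S^1$ is connected follows from Theorem~\ref{theorem:main-theorem-on-connectedness-main} applied to the continuous surjection $t \mapsto (\cos 2\pi t, \sin 2\pi t)$ from the connected interval $[0,1]$.

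The argument has essentially no hard step—both examples are elementary—so the only thing requiring care is making sure each claimed action genuinely satisfies the group-action axioms ($1 \cdot s = s$ and $(gg')\cdot s = g\cdot(g'\cdot s)$) and that the orbits are exactly as described. For the $\GL_1(\R)$ example one checks that $\R^+$ acting by scalar multiplication fixes the sign of its argument, so the two sign classes are precisely the orbits; for the $S^1$ example one checks that the antipodal map has order two and no fixed points on the circle, so every orbit has cardinality two. I would state these verifications in one or two lines each and conclude. If one prefers the two examples to share a common flavor, an alternative for (1) is $X = \GL_2(\R)$ with $G = \mathrm{SL}_2(\R)$ acting by left multiplication: the orbits are the two components (positive and negative determinant), each connected, while $X$ is disconnected—this dovetails with Proposition~\ref{prop:homeo-minimum-GL} and Corollary~\ref{corollary:cc-full-rank-2layer}.
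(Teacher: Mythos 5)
Your proposal is correct, and both clauses are settled by valid (and nicely economical) examples. Your part (1) is essentially the paper's argument in a slightly streamlined form: the paper also uses the multiplicative group $\R^+$, acting on the second coordinate of two parallel open rays $\{(0,y):y>0\}\cup\{(1,y):y>0\}$ in $\R^2$, whereas you let it act directly on $\R\setminus\{0\}=\GL_1(\R)$; the idea is the same, and your $\GL_2(\R)$ with $\mathrm{SL}_2(\R)$ alternative is also fine. Your part (2) takes a genuinely different route. The paper uses $X=\R^2\setminus\{0\}$ with the (disconnected) multiplicative group $\R^*$ acting by scalar multiplication, so that each orbit is a punctured line through the origin, i.e.\ two opposite open rays. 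You instead take the compact space $S^1$ with the finite group $\Z/2\Z$ acting by the antipodal map, so orbits are two-point discrete sets. Both are correct; yours is arguably more elementary (compact space, finite group, no need to reason about rays), while the paper's version stays closer to the running $\R^*$/scaling theme and makes it visually obvious that the disconnected orbits tile a connected punctured plane. In both approaches the group used for part (2) is itself disconnected, which is exactly the freedom that allows disconnected orbits while keeping the ambient space connected; that is worth flagging, as it is consistent with Corollary~\ref{corollary:connected-components-orbit-group-main}. Minor remark: in the paper a space and a continuous function on it are fixed throughout, but this particular proposition only requires a continuous action, so your examples need no extra verification beyond what you already sketch (continuity of the action and identification of the orbits), and those checks are indeed one-liners.
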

\begin{proof}
    For part (a), consider a subspace of $\R^2$, $X = X_1 \cup X_2$ where $X_1 = \{(x, y): x=0, y>0\}$ and $X_2 = \{(x, y): x=1, y>0\}$. The space $X$ is not connected. Let $G$ be the multiplicative group of positive real numbers and act on $X$ by multiplication on the second coordinate. Then there are two orbits, $X_1$ and $X_2$, which are both connected.

    For part (b), consider the space $X = \R^2 \setminus \{0\}$. Then $X$ is connected. Let $G$ be the multiplicative group of real numbers, which acts on $X$ by multiplication on both coordinates. That is, $g \cdot (x_1, x_2) = (gx, gx_2), \forall (x_1, x_2) \in X, \forall g \in G$. The orbit of any point $(x_1, x_2) \in X$ is not connected. 
\end{proof}

Nevertheless, since the set of orbits partitions the space, we can use the following bound on the number of components of the space. 

\begin{proposition}
\label{prop:components-from-partition}
    Let $X$ be a topological space and let $X = \coprod_i X_i$ be a partition of $X$ into disjoint subspaces. Then $| \pi_0(X)| \leq \sum_i |\pi_0(X_i)|$.
\end{proposition}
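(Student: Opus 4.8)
The plan is to exhibit a surjection from the disjoint union of the component sets $\bigsqcup_i \pi_0(X_i)$ onto $\pi_0(X)$; the existence of such a surjection immediately gives the cardinality bound $|\pi_0(X)| \le \sum_i |\pi_0(X_i)|$.

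First I would record the transitivity of the subspace topology: if $C \subseteq X_i \subseteq X$, then the topology $C$ inherits from $X_i$ coincides with the topology $C$ inherits from $X$. Consequently, every connected component $C$ of $X_i$ (formed in the subspace topology of $X_i$) is a connected subset of $X$. Since the components of $X$ partition $X$ and are maximal connected subsets, $C$ is contained in exactly one component of $X$.

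Next I would define the map $\Phi: \bigsqcup_i \pi_0(X_i) \to \pi_0(X)$ that sends a component $C$ of some $X_i$ to the unique component of $X$ containing it; the previous paragraph shows this is well defined. For surjectivity, take any $D \in \pi_0(X)$ and pick a point $x \in D$. Because the $X_i$ partition $X$, there is a (unique) index $i$ with $x \in X_i$; let $C$ be the component of $X_i$ containing $x$. Then $C$ is connected in $X$ and $x \in C \cap D$, so $C \subseteq D$ by maximality of $D$, whence $\Phi(C) = D$. Thus $\Phi$ is surjective, and
$$|\pi_0(X)| \;\le\; \Big| \bigsqcup_i \pi_0(X_i) \Big| \;=\; \sum_i |\pi_0(X_i)|.$$

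I do not expect a real obstacle here. The only points needing a little care are the transitivity of the subspace topology (so that "connected in $X_i$" genuinely implies "connected in $X$") and checking that the construction of $\Phi$ and the final cardinal inequality remain valid when the index set or the individual component sets $\pi_0(X_i)$ are infinite, which they do since surjections never increase cardinality and cardinal sums behave correctly.
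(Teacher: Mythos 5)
Your proof is correct and follows essentially the same route as the paper's: both build a surjection from the collection of components of the $X_i$ onto $\pi_0(X)$ by sending each component of an $X_i$ to the unique component of $X$ containing it, then invoke the fact that surjections do not increase cardinality. You spell out a couple of details the paper leaves implicit (transitivity of the subspace topology and the surjectivity check via an arbitrary $D \in \pi_0(X)$), but the argument is the same.
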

\begin{proof}
    Let $S = \{A \subseteq X: \exists i, A \text{ is a component of } X_i\}$ be the union of the components of the subspaces. Then $S$ is a partition of $X$, and every element in $S$ is connected. Therefore, there is a surjective map from $S$ to $\pi_0(X)$, defined by mapping each $s \in S$ to the element of $\pi_0(X)$ that includes $s$. This implies that $|\pi_0(X)| \leq |S| = \sum_{i=1}^n |\pi_0(X_i)|$. 
\end{proof}

Consider a topological space $X$ and a group $G$ that acts on $X$. Let $O = \{O_1, ..., O_n\}$ be the set of orbits. By Proposition \ref{prop:components-from-partition}, the number of components of the orbits give the following upper bound on the number of components of the space: $|\pi_0(X)| \leq \sum_{i=1}^n |\pi_0(O_i)|$.



\section{Additional Related Work}
\paragraph{Topological approaches in machine learning.}
Topology has been applied in other areas of machine learning, particularly through tools such as persistent homology, to study the structure of data manifolds and training dynamics \citep{chazal2021introduction}. 
For example, prior work has used topological data analysis (TDA) to study the shape of activation patterns, understand generalization, and visualize learning trajectories \citep{gabrielsson2019exposition}.

\paragraph{Bounds on the number of connected components.} 
For neural networks that are systems of polynomials, the number of critical points can be upper bounded using methods in algebraic geometry. 
\citet{mehta2021loss} shows that after adding a generalized L2 regularization, there are no positive-dimensional solutions in deep linear networks with mean squared error. They observe that the critical points, which satisfy $\nabla L = 0$, form the solution set of a system of polynomial equations. They then provide two upper bounds, the classical Bezout bound (CBB) and the Bernshtein-Kushnirenko-Khovanskii Bound (BKK), on the number of isolated complex critical points. 
\citet{bharadwaj2023complex} improves the upper bound for neural networks with one hidden layer and one training data point.
\citet{kohn2022geometry} provide bounds on the number of critical points in the function space for linear convolutional networks.
Since proving the exact number of connected components of a minimum is not always easy, a possible future direction is to derive Bezout and BKK bounds on the number of connected components for various architectures and perhaps extend this analysis beyond polynomial neural networks.

\section{Proofs in Section \ref{sec:connectedness}}
\label{appendix:connectedness}
\begin{manualproposition}{\ref{prop:homeo-minimum-GL}}
    There is a homeomorphism between $L^{-1}(0)$ and $(\GL_h)^{l-1}$.
\end{manualproposition}
\begin{proof}
    Recall that $W_1, ..., W_n, X, Y$ are matrices in $\R^{h \times h}$, and $X, Y$ are both full rank. Consider the map 
    \begin{align}
    \label{eq:homeo-minimum-GL}
        f: (\GL_h)^{l-1} \to L^{-1}(0), \quad (g_1, ..., g_{l-1}) \mapsto (g_1 X^{-1},
        g_2, ..., g_{l-1}, Y \prod_i^{l-1} g_i^{-1}).
    \end{align}
    The inverse $f^{-1}: (W_1, ..., W_l) \mapsto (W_1 X, W_2, W_3, ..., W_{l-1})$ is well defined, because $X$, $W_1, W_2, W_3, ..., W_{l-1}$ are all full-rank.
    Since both $f$ and $f^{-1}$ are continuous, $f$ is a homeomorphism between $(\GL_h)^{l-1}$ and $L^{-1}(0)$. 
\end{proof}


\begin{manualcorollary}{\ref{corollary:cc-full-rank-2layer}}
    The minimum of $L$ has $2^{l-1}$ connected components.
\end{manualcorollary}
\begin{proof}
    From Proposition $\ref{prop:homeo-minimum-GL}$, $L^{-1}(0)$ is homeomorphic to $(\GL_h)^{l-1}$. According to Corollary \ref{corollary:topological-invariance-of-connectedness}, this implies that $L^{-1}(0)$ has the same number of connected components as $(\GL_h)^{l-1}$. 
    From Proposition \ref{prop:connected-component-direct-product}, $GL_h(\R)^{l-1}$ has $2^{l-1}$ connected components. Therefore, $L^{-1}(0)$ has $2^{l-1}$ connected components.
\end{proof}

\begin{manualproposition}{\ref{prop:cc-1d-resnet}}
    Let $n = 1$. Assume that $X, Y \neq 0$. When $\eps = 0$, the minimum of $L$ has 4 connected components. When $\eps \neq 0$, the minimum of $L$ has 3 connected components. 
\end{manualproposition}
\begin{proof}
    When $\eps = 0$, the skip connection is effectively removed, and the loss function \eqref{eq:loss-multi-layer-linear-skip} reduces to \eqref{eq:loss-multi-layer-linear}. By Corollary \ref{corollary:cc-full-rank-2layer}, the minimum of $L$ has 4 connected components. In the rest of the proof, we consider the case where $\eps \neq 0$.

    Let $(W_{1_0}, W_{2_0}, W_{3_0}) = (I, (\alpha-\eps)I, \alpha^{-1}YX^{-1})$, where $\alpha \in \R$ is an arbitrary number such that $\alpha \neq \eps$ and $\alpha \neq 0$. Then $(W_{1_0}, W_{2_0}, W_{3_0})$ is a point in $L^{-1}(0)$.
    Define set $G_1 = \{g \in R^{h \times h}: \det{(g W_{2_0} W_{1_0} X + \eps X)} \neq 0\}$. 
    Let $a: GL_1 \times G_1 \to \Par$ be the following map:
    \begin{align}
        g_1, g_2 
        \mapsto 
        (&g_1 W_{1_0}, \cr
        &g_2 W_{2_0} g_1^{-1}, \cr
        &W_{3_0} (W_{2_0} W_{1_0} X + \eps X) (g_2 W_{2_0} W_{1_0} X + \eps X)^{-1}).
    \end{align}
    From the definition of $G_1$, $(g_2 W_{2_0} W_{1_0} X + \eps X)$ is invertible, so $a$ is well defined. Additionally, we have $L(a(g_1, g_2)) = L(W_{1_0}, W_{2_0}, W_{3_0}) = 0, \forall g_1, g_2 \in GL_1 \times G_1$. Therefore, denoting the image of $a$ as $S_1$, we have $S_1 \subseteq L^{-1}(0)$.

    Let $S_0 = \{(W_1, W_2, W_3): W_3 = Y(\eps X)^{-1} \text{ and } W_1 = 0 \}$ if $\eps \neq 0$, or $\emptyset$ otherwise. 
    For $(W_1, W_2, W_3) \in S_0$, we have $L(W_{1}, W_{2}, W_{3}) = || Y - Y(\eps X)^{-1} (0 + \eps X)||_2 = 0$. Therefore, $S_0 \subseteq L^{-1}(0)$. 

    We then show that the minimum of $L$ is the union of $S_1$ and $S_0$. Consider a point $(W_1, W_2, W_3) \in L^{-1}(0)$. 
    If $W_1 = 0$, then $\eps \neq 0$, otherwise $(W_1, W_2, W_3)$ cannot be in $L^{-1}(0)$. In this case, $W_3$ must equal to $Y(\eps X)^{-1}$, and $(W_1, W_2, W_3) \in S_0$. 
    If $W_1 \neq 0$, then $W_1 W_{1_0}^{-1} \in GL_1$ and $W_2 W_1 W_{1_0}^{-1} W_{2_0}^{-1} \in G_1$. The second part is due to $W_2 W_1 W_{1_0}^{-1} W_{2_0}^{-1} W_{2_0} W_{1_0} X + \eps X = W_2 W_1 X + \eps X \neq 0$ since $(W_1, W_2, W_3) \in L^{-1}(0)$. In this case we have $(W_1, W_2, W_3) = a(W_1 W_{1_0}^{-1}, W_2 W_1 W_{1_0}^{-1} W_{2_0}^{-1})$, which means that $(W_1, W_2, W_3) \in S_1$. 
    
    The number of connected components of $S_1$ and $S_0$ can be obtained from their structures. Since $W_{2_0} W_{1_0} X \neq 0$, there is a homeomorphism between $G_1$ and $GL_1$ defined by the map 
    \begin{align}
        f: G_1 \to GL_1, g \mapsto g W_{2_0} W_{1_0} X + \eps X
    \end{align}
    with inverse $f^{-1}: GL_1 \to G_1, g \mapsto \eps (g - \eps X) (W_{2_0} W_{1_0} X)^{-1}$. Since $a$ is also a homeomorphism, its image $S_1$ is homeomorphic to $GL_1 \times GL_1$ and has 4 connected components.
    When $\eps \neq 0$, $S_0$ is a line and thus has 1 connected component. 
    
    The last part of the proof shows the connectedness of the connected components of $S_1$ and $S_0$. 
    Let $G_1^+ = \{g_2 \in G_1: f(g_2) \in GL^{sign(\eps X)}\}$ be the connected component in $G_1$ that correspond to $GL^{sign(\eps X)}$, and $G_1^- = \{g_2 \in G_1: f(g_2) \in GL^{-sign(\eps X)}\}$ be the component that correspond to $GL^{-sign(\eps X)}$.
    For convenience, we name the connected components of $Im(a)$ as follows:
    \begin{align*}
        C_1 = \{(W_1, W_2, W_3) \in \Par: (W_1, W_2, W_3) = a(g_1, g_2), g_1 \in GL^+, g_2 \in G_1^+\} \cr
        C_2 = \{(W_1, W_2, W_3) \in \Par: (W_1, W_2, W_3) = a(g_1, g_2), g_1 \in GL^-, g_2 \in G_1^+\} \cr
        C_3 = \{(W_1, W_2, W_3) \in \Par: (W_1, W_2, W_3) = a(g_1, g_2), g_1 \in GL^+, g_2 \in G_1^-\} \cr
        C_4 = \{(W_1, W_2, W_3) \in \Par: (W_1, W_2, W_3) = a(g_1, g_2), g_1 \in GL^-, g_2 \in G_1^-\}
    \end{align*}

    Note that for $(W_1, W_2, W_3) \in S_1$, there exists a (unique) $g_2 \in G_1$ such that we can write $W_3$ as
    \begin{align*}
        W_3 = W_{3_0} [W_{2_0} W_{1_0} X + \eps X] [g_2 W_{2_0} W_{1_0} X + \eps X]^{-1}) = Y f(g_2)^{-1}.
    \end{align*}
    Following from the definition of $G_1^+$, for a point $(W_1, W_2, W_3)$ in $C_1$ or $C_2$, $sign(W_3) = sign(Y(\eps X)^{-1})$. Additionally, when $g_2$ is close to 0, $g_2$ belongs to $G_1^+$. The boundary of both $C_1$ and $C_2$ contain a point in $S_0$:
    \begin{align*}
        \lim_{g_1 \to 0^+} a(g_1, g_1) = \lim_{g_1 \to 0^-} a(g_1, g_1) = (0, \alpha - \eps, Y(\eps X)^{-1}) \in S_0.
    \end{align*}
    Therefore, both $C_1$ and $C_2$ are connected to $S_0$.
    
    For points in $C_3$ and $C_4$, $sign(W_3) \neq sign(Y(\eps X)^{-1})$. Therefore, no point in $C_3$ or $C_4$ can be sufficiently close to $S_0$. As a result, these components are not connected to $S_0$.
    In summary, when $\eps \neq 0$, $S_0$ connects 2 components of $S_1$, and the minimum of $L$ has 3 connected components. 
\end{proof}

We note that connectedness alone does not imply easy connectivity in the sense of short or simple paths between solutions. Being in the same connected components is a necessary condition for connectivity, but a single component may still contain complex geometry necessitating complicated connecting paths.

Defining the ease of connectivity is subtle. One natural measure is the parametric complexity of the connecting curves, quantifiable by their degree if polynomial, or number of segments if piece-wise. Another possible definition for easy connectivity would be low curvature of the minimum manifold or short geodesic distance between two points on it. As we saw in Section \ref{sec:approximate-linear-connectivity-curvature}, low curvature implies that linear interpolation stays near the manifold. Other potential definitions include whether the connecting curve has an analytical expression, or how many points are needed to approximate it within a certain error. It would be interesting to examine these properties for symmetry-induced connecting curves.

\section{Proofs in Section \ref{sec:mode-connectivity}}
\label{appendix:mode-connectivity}
\begin{manuallemma}{\ref{lemma:gl-full-rank-2layer}}
    Consider two points $(W_1, W_2), (W_1', W_2') \in L^{-1}(0)$ that are not connected in $L^{-1}(0)$. For any $g \in GL(h)$ such that $det(g) < 0$, $g \cdot (W_1, W_2)$ and $(W_1', W_2')$ are connected in $L^{-1}(0)$.
\end{manuallemma}

\begin{proof}
    Consider the map $f$ and its inverse $f^{-1}$ defined in \eqref{eq:homeo-minimum-GL} in the proof of Proposition \ref{prop:homeo-minimum-GL}.
    Let $g = f^{-1}(W_1, W_2)$ and $g' = f^{-1}(W_1', W_2')$. 
    By Corollary \ref{corollary:homeomorphism-invariance-of-connectedness}, since $(W_1, W_2)$ and $(W_1', W_2')$ are not in the same connected component of $L^{-1}(0)$, $g$ and $g'$ are not in the same connected component of $GL_h$. Equivalently, $det(gg') < 0$. 
    Consider a $g_1 \in GL_h$ such that $det(g) < 0$. Then $det(g_1 g g') > 0$, which means that $g_1 g$ and $g'$ belong to the same connected component of $GL_h$. 
    Therefore, according to Corollary \ref{corollary:homeomorphism-invariance-of-connectedness}, $g_1 \cdot (W_1, W_2) = f(g_1 g)$ and $(W_1', W_2') = f(g')$ belong to the same connected component of $L^{-1}(0)$.
\end{proof}

{
\paragraph{Example.}
Suppose 
$\left(W_1 = 
\begin{bmatrix}
    1 & 0 \\
    0 & 1 
\end{bmatrix}, 
W_2 = 
\begin{bmatrix}
    -1 & 0 \\
    0 & 1 
\end{bmatrix} \right)$
is a point in $L^{-1}(0)$ for some loss function $L$.
Then 
$\left(W_1' = 
\begin{bmatrix}
    -1 & 0 \\
    0 & 1 
\end{bmatrix}, 
W_2' = 
\begin{bmatrix}
    1 & 0 \\
    0 & 1 
\end{bmatrix} \right)$
is also a point in $L^{-1}(0)$.
However, $(W_1, W_2)$ and $(W_1', W_2')$ are not on the same connected component of the minimum, since their determinants have different signs.
By Lemma \ref{lemma:gl-full-rank-2layer}, any $g \in GL(h)$ with $det(g) < 0$ can bring $(W_1, W_2)$ and $(W_1', W_2')$ to the same connected component in $L^{-1}(0)$.
Let $g$ be the permutation matrix 
$\begin{bmatrix}
    0 & 1 \\
    1 & 0
\end{bmatrix}$.
Then $g \cdot (W_1, W_2) = 
\left(
\begin{bmatrix}
    0 & 1 \\
    1 & 0 
\end{bmatrix}, 
\begin{bmatrix}
    0 & -1 \\
    1 & 0 
\end{bmatrix}
\right)$,
which is in the same connected component as $(W_1', W_2')$.
}

\begin{manualproposition}{\ref{prop:permutation-full-rank-2layer}}
    Assume that $h \geq 2$. For all $(W_1, ..., W_l), (W_1', ..., W_l') \in L^{-1}(0)$, these exists a list of permutation matrices $P_1, ..., P_{l-1}$ such that $(W_1 P_1, P_1^{-1} W_2 P_2, ..., P_{l-2} W_{l-1} P_{l-1}, P_{l-1} W_l)$ and $(W_1', ..., W_l')$ are connected in $L^{-1}(0)$.
\end{manualproposition}

\begin{proof}
Let $(g_1, ..., g_{l-1}),(g'_1, ..., g'_{l-1}) \in (GL_h)^{n-1}$ such that $f(g_1, ..., g_{l-1}) = (W_1, ..., W_l)$ and $f(g'_1, ..., g'_{l-1}) = (W_1', ..., W_l')$. 
Let $P_0 = I$.
For $i = 1, ..., l-1$, if $det(g_i g_i' P_{i-1}^{-1}) > 0$, set $P_i$ to $I$. Otherwise, we set $P_i$ to an arbitrary element in $P \in S_h \setminus A_h$, which is not empty when $h \geq 2$.

Let $(g''_1, ..., g''_{l-1}) \in (GL_h)^{n-1}$ such that $f(g''_1, ..., g''_{l-1}) = (W_1 P_1, P_1^{-1} W_2 P_2, ..., P_{l-2} W_{l-1} P_{l-1}$, $P_{l-1} W_l)$. By the way we construct $P_i$'s, we have $g''_i = P_{i-1}^{-1} g'_i P_i$ and $det(g_i g_i'') > 0$. Therefore, $g_i$ and $g_i''$ belong to the same connected component of $(GL_h)^{l-1}$ for all $i$.
Since $f$ is a homeomorphism between $(\GL_h)^{l-1}$ and $L^{-1}(0)$, $(W_1 P_1, P^{-1} W_2 P_2, ..., P_{l-2} W_{l-1} P_{l-1}, P_{l-1} W_l)$ and $(W_1', ..., W_l')$ are connected in $L^{-1}(0)$.
\end{proof}



\begin{manualproposition}{\ref{prop:lcc-failure-homogeneous}}
    Consider the loss function of the following form 
    \begin{align}
        L: \Par \to \R, W = (W_1, ..., W_l) \mapsto || Y - W_l \sigma(W_{l-1} f(W_{l-2}, W_{l-3}, ..., W_1, X))||^2_2,
    \end{align}
    where $f$ is a function of $W_{l-2}, W_{l-3}, ..., W_1, X$, and $\sigma(cz) = c^k \sigma(z)$ for all $c \in \R$ and some $k > 0$.
    Assume that $||Y||_2 \neq 0$ and $L^{-1}(0) \neq \emptyset$.
    Also assume that $l \geq 2$.
    For any positive number $b > 0$, there exist $W, W' \in L^{-1}(0)$ that belong to the same connected component of $L^{-1}(0)$ and $0 < \alpha < 1$, such that $L\pa{(1-\alpha) W + \alpha W'} > b$.
\end{manualproposition}

\begin{proof}
    Let $W = (W_l, ..., W_2, W_1) \in L^{-1}(0)$ be an arbitrary point on the minimum of $L$. Let $W' = (W_l', ..., W_2', W_1') = (W_l m^{-k}, m W_{l-1}, W_{l-2}, ..., W_1)$. Then $W, W'$ belong to the same connected component of $L^{-1}(0)$, connected by curve $\gamma: \R \to \Par, \gamma(t) = ((1-t) W_l + t W_l m^{-k}, (1-t) W_{l-1} + t m W_{l-1}, W_{l-2}, ..., W_1)$. 
    
    Since $W \in L^{-1}(0)$, we have $W_l \sigma\left[ W_{l-1} f(W_{l-2}, ..., W_1, X) \right] = Y$. The loss on the linear interpolation of $W, W'$ is
    \begin{align}
        L\pa{(1-\alpha) W + \alpha W'}
        =& || Y - ((1-\alpha) W_{l} + \alpha W_{l}') \sigma\left[((1-\alpha) W_{l-1} + \alpha W_{l-1}') f(W_{l-2}, ..., W_1, X) \right]||^2_2 \cr
        =& || Y - (1-\alpha + \alpha m^{-k}) W_{l} \sigma\left[(1-\alpha + \alpha m) W_{l-1} f(W_{l-2}, ..., W_1, X) \right]||^2_2 \cr
        =& || Y - (1-\alpha + \alpha m^{-k}) (1-\alpha + \alpha m)^k W_l \sigma\left[ W_{l-1} f(W_{l-2}, ..., W_1, X) \right]||^2_2 \cr
        =& (1 - (1-\alpha + \alpha m^{-k}) (1-\alpha + \alpha m)^k)^2 ||Y||^2_2.
    \end{align}

    Let $\alpha = 0.5$. Then
    \begin{align}
        L\pa{(1-\alpha) W + \alpha W'} 
        &= \pa{1 - \pa{\frac{1}{2} + \frac{1}{2} m^{-k}} \pa{\frac{1}{2} + \frac{1}{2} m}^k}^2 ||Y||^2_2 \cr
        &= \pa{1 - 2^{-(k+1)}(1 + m^{-k}) (1 + m)^k}^2 ||Y||^2_2
    \end{align}
    Let $m = \pa{2^{k+1} \pa{\frac{\sqrt{b}}{||Y||^2} + 1} - 1}^{k}$. Recall that $k > 0$. Then $m > 0$, $(1 + m)^k > 1$, and
    \begin{align}
        2^{-(k+1)}(1 + m^{-k}) (1 + m)^k > 2^{-(k+1)}(1 + m^{-k}) = \frac{\sqrt{b}}{||Y||^2} + 1 > 1.
    \end{align}
    Therefore, the loss at our chosen values of $\alpha$ and $m$ is at least $b$:
    \begin{align}
        L\pa{(1-\alpha) W + \alpha W'} 
        &> \pa{1 - \pa{\frac{\sqrt{b}}{||Y||^2} + 1}}^2 ||Y||^2_2 = b.
    \end{align}
\end{proof}

Figure \ref{fig:homogeneous_lmc_unbounded_barrier} visualizes the loss barrier on the linear interpolation between two minima. 
We construct a network with loss function $\|W_5 \sigma\left( W_4 \sigma(W_3 \sigma(W_2 \sigma( W_1 X))) \right) - Y\|$, with $\sigma$ being a leaky ReLU function, $X \in \R^{8 \times 4}, Y \in \R^{4 \times 4}$, and $(W_1, W_2, W_3, W_4, W_5) \in \Par = \R^{16 \times 8} \times \R^{32 \times 16} \times \R^{16 \times 32} \times \R^{8 \times 16} \times \R^{4 \times 8}$. 
The network is initialized with random weights, and each element of $X, Y$ is sampled independently from a normal distribution.

We obtain the first minima $(W_1', W_2', W_3', W_4', W_5')$ by SGD, and the second $(W_1'', W_2'', W_3'', W_4'', W_5'') = (W_1', W_2', W_3', m W_4', W_5' m^{-1})$ by rescaling the last two layers with $m \in \R^+$. 
At large $m$, the two minima are farther apart, and the loss evaluated at the middle point of their linear interpolation grows unboundedly as predicted by Proposition \ref{prop:lcc-failure-homogeneous}.
\begin{figure}[ht]
\begin{center}
\includegraphics[width=0.3\columnwidth]{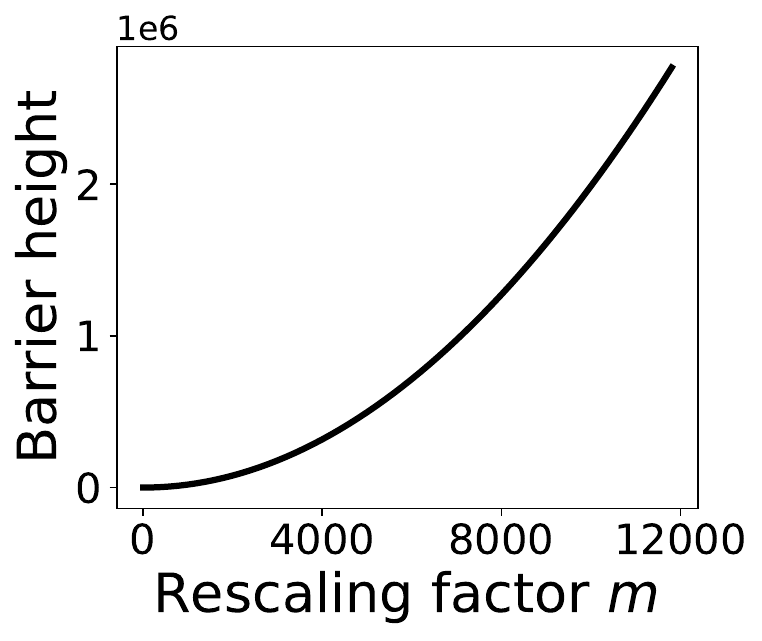}
\caption{Loss at the middle of the linear interpolation between two minima in a homogeneous network becomes unbounded when the two minima is far apart.}
\label{fig:homogeneous_lmc_unbounded_barrier}
\end{center}
\end{figure}


\begin{manualproposition}{\ref{prop:lcc-failure-homogeneous-permute}}
    Consider the loss function with the same set of assumptions in Proposition \ref{prop:lcc-failure-homogeneous}.
    Assume additionally that there does not exist a permutation $P$ such that every column of $P \sigma(W_{l-1} f(W_{l-2}, W_{l-3}, ..., W_1, X))$ is in the null space of $W_l$.
    For any positive number $b > 0$, there exist $(W_1, ..., W_l), (W_1', ..., W_l') \in L^{-1}(0)$ and $0 < \alpha < 1$, such that $(W_1, ..., W_{l-2}) = (W_1', ..., W_{l-2}')$ and $\min_{P \in S_n} L\pa{(1-\alpha) (W_1, ..., W_l) + \alpha (W_1, ..., W_{l-2},P^{-1} W_{l-1}, W_l P)} > b$.
\end{manualproposition}

\begin{proof}
    Let $W = (W_l, ..., W_2, W_1) \in L^{-1}(0)$ be an arbitrary point on the minimum of $L$. Let $W' = (W_l', ..., W_2', W_1') = (W_l m^{-k}, m W_{l-1}, W_{l-2}, ..., W_1)$. 
    
    Since $W \in L^{-1}(0)$, we have $W_l \sigma\left[ W_{l-1} f(W_{l-2}, ..., W_1, X) \right] = Y$. The loss on the linear interpolation of $W, W'$ is
    \begin{align}
        L\pa{(1-\alpha) W + \alpha W'}
        =& || Y - ((1-\alpha) W_{l} + \alpha W_{l}'P) \sigma\left[((1-\alpha) W_{l-1} + \alpha P^{-1} W_{l-1}') f(W_{l-2}, ..., W_1, X) \right]||^2_2.
    \end{align}
    Let $\alpha = 0.5$. Then
    \begin{align}
        L\pa{(1-\alpha) W + \alpha W'}
        =& || Y - \frac{1}{4} W_l (I + m^{-k} P) \sigma\left[(I + m P^{-1}) W_{l-1} f(W_{l-2}, ..., W_1, X) \right]||^2_2.
    \end{align}

    When $m \to \infty$,
    \begin{align}
        & \lim_{m \to \infty} \sigma\left[(I + m P^{-1}) W_{l-1} f(W_{l-2}, ..., W_1, X)\right] \cr
        =& \lim_{m \to \infty} m^k \sigma\left[(m^{-1} I + P^{-1}) W_{l-1} f(W_{l-2}, ..., W_1, X)\right] \cr
        =& \lim_{m \to \infty} m^k P^{-1} \sigma\left[ W_{l-1} f(W_{l-2}, ..., W_1, X)\right].
    \end{align}
    Therefore,
    \begin{align}
        \lim_{m \to \infty} L\pa{(1-\alpha) W + \alpha W'}
        =& \lim_{m \to \infty} || Y - \frac{1}{4} W_l (I + m^{-k} P) m^k P^{-1} \sigma\left[ W_{l-1} f(W_{l-2}, ..., W_1, X)\right]||^2_2 \cr
        =& \lim_{m \to \infty} || Y - \frac{1}{4} W_l (I + m^k P^{-1}) \sigma\left[ W_{l-1} f(W_{l-2}, ..., W_1, X)\right]||^2_2 \cr
        =& \lim_{m \to \infty} || \frac{3}{4}Y - \frac{m^k}{4} W_l P^{-1} \sigma\left[ W_{l-1} f(W_{l-2}, ..., W_1, X)\right]||^2_2.
    \end{align}

    Since we assumed that there does not exist a permutation $P$ such that every column of $P \sigma(W_{l-1} f(W_{l-2}, W_{l-3}, ..., W_1, X))$ is in the null space of $W_l$, at least one element in the second term is unbounded for any permutation $P$. Therefore, $L\pa{(1-\alpha) W + \alpha W'}$ is unbounded for any $P$.
\end{proof}

\begin{manualproposition}{\ref{prop:lcc-failure-W1W2}}
    Let $A \in \R^{n \times n}$ be an invertible matrix.
    Let set $S = \{(W_1, W_2) : W_1, W_2 \in \R^{n \times n}, W_1 W_2 = A \}$.
    For any positive number $b > 0$, there exist $W', W'' \in S$ and $0 < \alpha < 1$, such that $\min_{\hat{W} \in S}\|\pa{(1-\alpha) W' + \alpha W''} - \hat{W}\|_2 > b$.
\end{manualproposition}

\begin{proof}
    Let $W$ be an element of $S$. Let $W_1' = W_1 g_1^{-1}, W_2' = g_1 W_2, W_1'' = W_1 g_2^{-1}$, and $W_2'' = g_2 W_2$, where $g_1, g_2 \in \R^{n \times n}$ are invertible matrices. Note that $W' = (W'_1, W'_2)$ and $W'' = (W''_1, W''_2)$ are both in $S$. 
    Then,
    \begin{align}
        &\min_{\hat{W} \in S} \|\pa{(1-\alpha) W' + \alpha W''} - \hat{W}\|^2_2 \cr
        =& \min_{\hat{W} \in S}
        \|(1-\alpha) W_1 g_1^{-1} + \alpha W_1 g_2^{-1} - \hat{W}_1\|^2_2 
        + \|(1-\alpha) g_1 W_2 + \alpha g_2 W_2 - \hat{W}_2\|^2_2 \cr
        =& \min_{g \in GL(n)}
        \|W_1 ((1-\alpha) g_1^{-1} + \alpha g_2^{-1} - g^{-1})\|^2_2 
        + \|W_2 ((1-\alpha) g_1 + \alpha g_2 - g)\|^2_2.
    \end{align}
    
    Let $g_1 = \beta I$ and $g_2 = \beta^{-1} I$ for some $\beta > 0$. Let $\alpha = \frac{1}{2}$. Then, in the limit of a large $\beta$, we have
    \begin{align}
    \label{eq:linear-interp-GL}
        &\lim_{\beta \to \infty} \min_{\hat{W} \in S} 
        \|\pa{(1-\alpha) W + \alpha W'} - \hat{W} \|^2_2 \cr
        =& \lim_{\beta \to \infty} \min_{g \in GL(n)}
        \left\|W_1 \pa{\frac{\beta + \beta^{-1}}{2} I - g^{-1}} \right\|^2_2 
        + \left\|W_2 \pa{\frac{\beta + \beta^{-1}}{2} I - g} \right\|^2_2.
    \end{align}
    
    As $\beta \to \infty$, $g$ and $g^{-1}$ cannot approach $\frac{\beta + \beta^{-1}}{2} I$ simultaneously. Therefore, \eqref{eq:linear-interp-GL} is not bounded.
\end{proof}

\begin{manualproposition}{\ref{prop:lcc-failure-homogeneous-lower-bound}}
    Consider the loss function with the same set of assumptions in Proposition \ref{prop:lcc-failure-homogeneous}.
    Let $W \in L^{-1}(0)$ be a point on the minimum. 
    Consider the multiplicative group of positive real numbers $\R^+$ that acts on $L^{-1}(0)$ by $g \cdot (W_1, ..., W_l) = (W_1, ..., W_{l-2}, gW_{l-1}, W_l g^{-k})$, where $g \in \R^+$.
    Then there exists a positive number $b > 0$, such that for all $0 < \alpha < 1$ and $W' \in Orbit(W)$ with $||W_i'||_2 < c$ for all $i$ and some $c > 0$, the loss value for points on the linear interpolation $L\pa{(1-\alpha) W + \alpha W'} < b$.
\end{manualproposition}

\begin{proof}
    Since $W' \in Orbit(W)$, $W' = (W_l m^{-k}, m W_{l-1}, W_{l-2}, ..., W_1)$ for some $m > 0$. Additionally, $m$ and $m^{-k}$ are bounded since $W_i'$ is bounded.
    Since $W \in L^{-1}(0)$, we have $W_l \sigma\left[ W_{l-1} f(W_{l-2}, ..., W_1, X) \right] = Y$. The loss on the linear interpolation of $W, W'$ is
    \begin{align}
        L\pa{(1-\alpha) W + \alpha W'}
        =& || Y - ((1-\alpha) W_{l} + \alpha W_{l}') \sigma\left[((1-\alpha) W_{l-1} + \alpha W_{l-1}') f(W_{l-2}, ..., W_1, X) \right]||^2_2 \cr
        =& || Y - (1-\alpha + \alpha m^{-k}) W_{l} \sigma\left[(1-\alpha + \alpha m) W_{l-1} f(W_{l-2}, ..., W_1, X) \right]||^2_2 \cr
        =& || Y - (1-\alpha + \alpha m^{-k}) (1-\alpha + \alpha m)^k W_l \sigma\left[ W_{l-1} f(W_{l-2}, ..., W_1, X) \right]||^2_2 \cr
        =& (1 - (1-\alpha + \alpha m^{-k}) (1-\alpha + \alpha m)^k)^2 ||Y||^2_2.
    \end{align}
    As $m$, $m^{-k}$, and $\alpha$ are all bounded, the loss value for points on the linear interpolation $L\pa{(1-\alpha) W + \alpha W'}$ is also bounded.
\end{proof}

The connectedness results derived from symmetry raise several interesting questions about mode connectivity.
For example, it would be interesting to understand when and why there is no significant change in loss on the linear interpolation between two minima. One possible explanation is that there always exists a symmetry-induced path $\gamma$ that stays close to the linear interpolation. Another potential factor is the high dimensionality of the minimum, which increases the likelihood that a significant portion of the linear interpolation remains within the low-loss region.
Additionally, empirical observations suggest that both train and test accuracy remain nearly constant along paths connecting different SGD solutions \citep{garipov2018loss}. If these paths are induced by a group action, this would imply that the group action's dependence on data is weak.
Investigating the extent to which data influences these symmetries could provide deeper insights into the structure of the loss landscape and the generalization properties of neural networks.

\section{Proofs in Section \ref{sec:curves}}
\begin{manualproposition}
{\ref{prop:barrier-bound}}
    Let $(U, V) \in \Par$, and $(U', V') = g \cdot (U, V)$. Then 
    \begin{align}
        \|U\sigma(VX) - U'\sigma(V'X)\| 
        \leq \|U\sigma(VX)\|.
    \end{align}
\end{manualproposition}
\begin{proof}
We note that $I - \sigma(gVX)^{\dagger}\sigma(gVX)$ is a projection: 
\begin{align*}
    &(I - \sigma(gVX)^\dagger\sigma(gVX))^2 \cr
    =& I - \sigma(gVX)^\dagger\sigma(gVX) - \sigma(gVX)^\dagger\sigma(gVX)(I -  \sigma(gVX)^\dagger\sigma(gVX)) \cr
    =& I - \sigma(gVX)^\dagger\sigma(gVX).
\end{align*}
Therefore,
\begin{align}
    \|U\sigma(VX) - U'\sigma(V'X)\| 
    = \|U\sigma(VX) \pa{I - \sigma(gVX)^{\dagger}\sigma(gVX)} \|
    \leq \|U\sigma(VX)\|.
\end{align}
\end{proof}

\begin{manualtheorem}{\ref{thm:small-curvature-approx-straight-line}}
Let $L^{-1}(c) \subset \Par$, with $c \in \R$, be a level set of the loss function $L: \Par \to \mathbb{R}$. 
Let $\gamma: [0,1] \to L^{-1}(c)$ be a smooth curve in $L^{-1}(c)$ connecting two points $\vw_1 = \gamma(0)$ and $\vw_2 = \gamma(1)$.
Suppose the curvature $\kappa(t)$ of $\gamma$ satisfies $\kappa(t) \leq \kappa_{\max}$ for all $t \in [0,1]$. 

Let $S$ be the straight line segment connecting $\vw_1$ and $\vw_2$. Then, for any point \( \vw \) on $S$, the distance to $L^{-1}(c)$ is bounded by
\begin{align*}
    \operatorname{dist}(\vw, L^{-1}(c)) \leq d_{\max} = \frac{1}{\kappa_{\max}} \left(1 - \sqrt{1 - \left(\frac{\kappa_{\max} \|\vw_2 - \vw_1\|_2 }{2} \right)^2} \right).
\end{align*}
Furthermore, assuming $L$ is Lipschitz continuous with Lipschitz constant $C_L$, the loss at any point $\vw$ on $S$ satisfies
\begin{align*}
    |L(\vw) - c| \leq C_L d_{\max}.
\end{align*}
\end{manualtheorem}

\begin{proof}
We will find an upper bound for the maximum distance between a smooth curve and the chord connecting two points on the curve, assuming the curvature of the curve is bounded by $\kappa_{\max}$.

The curvature $\kappa$ at a point on a curve is defined as $\kappa = \frac{1}{R}$, where $R$ is the radius of the osculating circle at that point.
Let $s$ be the maximum perpendicular distance from the midpoint of a chord to the curve.
For a circular arc, Pythagorean theorem gives 
\begin{align*}
    R^2 = \left(\frac{\|\vw_2 - \vw_1\|_2}{2}\right)^2 + (R - s)^2.
\end{align*}
Solving for $s$:
\begin{align*}
    s = R \left(1 - \sqrt{1 - \left(\frac{\|\vw_2 - \vw_1\|_2}{2R}\right)^2}\right).
\end{align*}

Substitute $R = \frac{1}{\kappa}$ into the above, we have
\begin{align*}
    s = \frac{1}{\kappa} \left(1 - \sqrt{1 - \left(\frac{\kappa \|\vw_2 - \vw_1\|_2}{2}\right)^2}\right).
\end{align*}

Since the curvature of $\gamma$ is everywhere less than or equal to $\kappa_{\max}$, the curve cannot bend more sharply than the osculating circle with curvature $\kappa_{\max}$. Therefore, the maximum deviation $d_{\max}$ between $\gamma$ and its chord cannot exceed that of the osculating circle:
\begin{align*}
    \operatorname{dist}(\vw, L^{-1}(c)) 
    \leq d_{\max}
    \eqdef \frac{1}{\kappa_{\max}} \left(1 - \sqrt{1 - \left(\frac{\kappa_{\max} \|\vw_2 - \vw_1\|_2}{2}\right)^2}\right).
\end{align*}

Assuming $L$ is Lipschitz continuous with Lipschitz constant $C_L$, for any $\vw$ on $S$, we have
\begin{align*}
    |L(\vw) - c| &= |L(\vw) - L(\gamma(t))| \leq C_L \|\vw - \gamma(t)\| \leq C_L d_{\max}.
\end{align*}
\end{proof}


\end{document}